\title{Federated Robustness Propagation: Sharing Robustness in Heterogeneous Federated Learning}
\definecolor{Gray}{gray}{0.9}
\definecolor{lightred}{rgb}{0.909, 0.368, 0.368}
\definecolor{lightblue}{rgb}{0.274, 0.580, 0.866}
\newcolumntype{g}{>{\columncolor{Gray}}c}
\newtheorem{theorem}{Theorem}[section]
\newtheorem{lemma}{Lemma}[section]
\newtheorem{assumption}{Assumption}[section]
\theoremstyle{definition}
\newtheorem{definition}{Definition}[section]
\theoremstyle{remark}
\newcommand{\preprintrm}[1]{}
\newcommand{\cD}{{\mathcal{D}}}
\newcommand{\cH}{{\mathcal{H}}}
\newcommand{\cS}{{\mathcal{S}}}
\newcommand{\cX}{{\mathcal{X}}}
\newcommand{\RR}{\mathbb{R}}
\newcommand{\bc}{\begin{center}}
\newcommand{\ec}{\end{center}}
\newcommand{\bdm}{\begin{displaymath}}
\newcommand{\edm}{\end{displaymath}}
\newcommand{\beq}{\begin{equation}}
\newcommand{\eeq}{\end{equation}}
\newcommand{\bfl}{\begin{flushleft}}
\newcommand{\efl}{\end{flushleft}}
\newcommand{\bt}{\begin{tabbing}}
\newcommand{\et}{\end{tabbing}}
\newcommand{\beqn}{\begin{align}}
\newcommand{\eeqn}{\end{align}}
\newcommand{\beqs}{\begin{align*}} %
\newcommand{\eeqs}{\end{align*}}  %
\newcommand{\norm}[1]{\left\|#1\right\|}
\newcommand{\etal}{\emph{et al.}}
\newcommand{\Ebb}{\mathbb{E}}
\author{%
	Junyuan Hong\textsuperscript{*}
	\And
	Haotao Wang\textsuperscript{\dag}
	\And
	Zhangyang Wang\textsuperscript{\dag}
	\And
	Jiayu Zhou\textsuperscript{*} \\
	\textsuperscript{*}Department of Computer Science and Engineering \\
	Michigan State University \\
	\texttt{\{hongju12,jiayuz\}@msu.edu} \\
	\textsuperscript{\dag}Department of Electrical and Computer Engineering \\
	University of Texas at Austin \\
	\texttt{\{htwang,atlaswang\}@utexas.edu} \\
}
\begin{document}

\maketitle

\begin{abstract}
  Federated learning (FL) emerges as a popular distributed learning schema that learns a model from a set of participating users without sharing raw data. One major challenge of FL comes with heterogeneous users, who may have distributionally different (or non-iid) data and varying computation resources. As federated users would use the model for prediction, they often demand the trained model to be robust against malicious attackers at test time. Whereas adversarial training (AT) provides a sound solution for centralized learning, extending its usage for federated users has imposed significant challenges, as many users may have very limited training data and tight computational budgets, to afford the data-hungry and costly AT. In this paper, we study a novel FL strategy: propagating adversarial robustness from rich-resource users that can afford AT, to those with poor resources that cannot afford it, during federated learning. We show that existing FL techniques cannot be effectively integrated with the strategy to propagate robustness among non-iid users and propose an efficient propagation approach by the proper use of batch-normalization. We demonstrate the rationality and effectiveness of our method through extensive experiments. Especially, the proposed method is shown to grant federated models remarkable robustness even when only a small portion of users afford AT during learning. Source code will be released.
\end{abstract}

\section{Introduction}
\label{sec:intro}

Federated learning (FL)~\citep{mcmahan2017communicationefficient} is a learning paradigm that trains models from distributed users or participants (e.g., mobile devices) without requiring raw training data to be shared, alleviating the rising concern of privacy issues when learning with sensitive data and facilitating learning deep models by enlarging the amount of data for training. 
In a typical FL algorithm, each user trains a model locally using their own data and a server iteratively aggregates users' intermediate models, converging to a model that fuses training information from all users. 

A major challenge in FL comes from two types of the user heterogeneity.
One type of heterogeneity is distributional differences in training data collected by users from diverse user groups, namely \emph{data heterogeneity}~\citep{fallah2020personalized}. 
The heterogeneity should be carefully handled during the learning as a single model trained by FL may fail to accommodate the differences and sacrifices model accuracy~\citep{yu2020salvaging}.
Another type of heterogeneity is the difference of computing resources, named hardware heterogeneity, as different types of hardware used by users usually result in varying computation budgets.
For example, consider an application scenario of FL from mobile phones~\citep{hard2019federated}, where different types of mobile phones (e.g., generations of the same brand) may have drastically different computational power (e.g., memory or CPU frequency). 
As the model size scales with task complexities, the ubiquitous hardware heterogeneity may expel a great number of resource-limited users from the FL process, reduces training data and therefore calls for hardware-aware alternatives~\cite{diao2021heterofl}.

The negative impacts of the heterogeneity become aggravated when an adversarially robust model is desired but its training is not affordable by some users.
The essence of robustness comes from the unnatural vulnerability of models against visually imperceptible noise that can significantly mislead model predictions.
To gain robustness, a straightforward extension of FL, federated adversarial training (FAT), can be adopted~\cite{zizzo2020fat,reisizadeh2020robust}, where each user trains models with adversarially noised samples, namely adversarial training (AT)~\citep{madry2018deep}.
Despite the robustness benefit by AT, prior studies pointed out that the AT is data-thirsty and computationally expensive \citep{shafahi2019adversarial}.
Given the fact that each individual user may not have enough data to perform AT, involving a fair amount of users in FAT becomes essential, but may also induce higher data heterogeneity from diverse data sources.
Meanwhile, the increasingly intensive computation can be prohibitive especially for resource-limited users, that could be $3-10$ times more costly than the standard equivalent \citep{shafahi2019adversarial,zhang2019you}.
As such, it is often unrealistic to enforce \emph{all} users in a FL process to conduct AT locally, despite the fact that the robustness is indeed a strongly desired or even required property for all users. 
This conflict raises a challenging yet interesting question: Is it possible to \emph{propagate adversarial robustness in FL} so that resource-limited users can efficiently benefit from robust training of resource-sufficient users even if the latter has distributionally different data?

Motivated by the question above, we formulate a \emph{novel learning problem} called Federated Robustness Propagation (FRP).
We consider a rather common non-iid FL setting that involves budget-sufficient users (AT users) that conduct adversarial training, and budget-limited ones (ST users) that can only afford standard training.
The goal of FRP is to propagate the adversarial robustness from AT users to ST users, especially when they have different data distributions. 
In \cref{fig:highlight}, we show that independent AT by users without FL (\texttt{local AT}) will not yield a robust model since each user has scarce training data.
Directly extending an existing FL algorithm, e.g., \emph{FedAvg} \citep{mcmahan2017communicationefficient} or a heterogeneity-mitigated one \emph{FedBN} \citep{li2020fedbn} with AT treatments, dubbed FATAvg and FATBN, give very limited capability of robustness.
\begin{wrapfigure}{r}{0.4\textwidth}
    \vskip -0.1in
    \centering
    \includegraphics[width=0.32\textwidth]{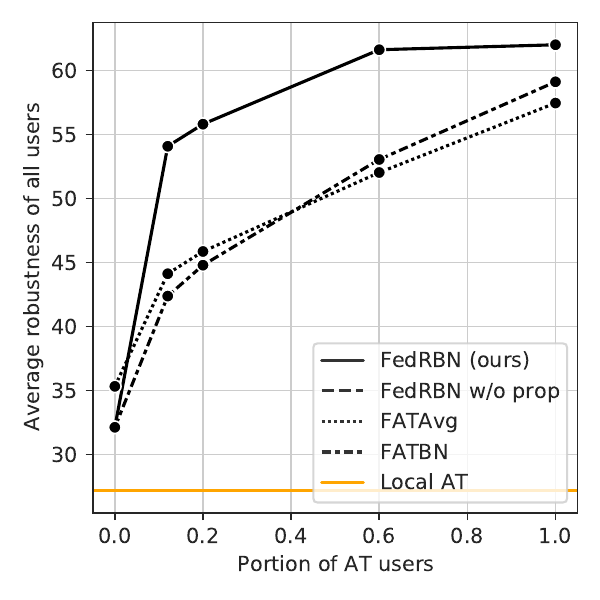}
    \vskip -0.1in
    \caption{
    Comparison of robustness on a varying portion of AT users, where a 5-domain digit recognition dataset is distributed to $50$ users in total and details are in \cref{sec:app:exp_highlight}. %
    }
    \label{fig:highlight}
\end{wrapfigure}

To address the aforementioned challenges, we first provide a \emph{novel insight} that the failure of the traditional method comes from the non-transferable knowledge in the robust BNs.
Even if ST users can borrow the BN parameters from other AT users and make the rest parameters co-trained by all users, the integrated model will not be robust on the ST users.
As conducting AT is so inefficient for ST users, we propose a \emph{novel method} Federated Robust Batch-Normalization (FedRBN) to facilitate efficient sharing of adversarial robustness among users with non-iid data.
With a multi-BN architecture, we propagate adversarial robustness by aggregating the desired knowledge adaptively from multiple AT users to ST users efficiently embedded in few personalized (BN) parameters.
To promote the transferability of robust BNs, we calibrate non-personalized parameters when preserving the robustness of shared noise-aware BNs.
We conduct extensive experiments demonstrating the feasibility and effectiveness of the proposed method.
In \cref{fig:highlight}, we highlight some experimental results from \cref{sec:exp}.
When only $20\%$ of non-iid users used AT during learning, the proposed FedRBN yields robustness, competitive with the best all-AT-user result by only a $6\%$ drop (out of $62\%$) on robust accuracy. 
Note that even if our method with $100\%$ AT users increase the upper bound of robustness, such a bound is usually not attainable in the presence of resource-limited users that cannot afford AT during learning. 

\section{Related Work}
\label{sec:related}
\vskip -0.1in

\textbf{Federated learning for robust models}.
The importance of adversarial robustness in the context of federated learning, i.e., federated adversarial training (FAT), has been discussed in a series of recent literature~\citep{zizzo2020fat,reisizadeh2020robust,kerkouche2020federated}. 
Zizzo~\etal~\citep{zizzo2020fat} empirically evaluated the feasibility of practical FAT configurations (e.g., ratio of adversarial samples) augmenting FedAvg with AT but only in \emph{iid} and label-wise non-\emph{iid} scenarios.
The adversarial attack in FAT was extended to a more general affine form, together with theoretical guarantees of distributional robustness \citep{reisizadeh2020robust,zhang2022distributed,chen2022calfat}.
It was found that in a communication-constrained setting, a significant drop exists both in standard and robust accuracies, especially with non-\emph{iid} data \citep{shah2021adversarial}.
In addition to the challenges investigated above, this work studies challenges imposed by hardware heterogeneity in FL, which was rarely discussed. 
Especially, when only limited users have devices that afford AT, we strive to efficiently share robustness among users, so that users without AT capabilities can also benefit from such robustness.

\textbf{Robust federated optimization}.
Another line of related work focuses on the robust aggregation of federated user updates~\citep{kerkouche2020federated,fu2019attackresistant}.
Especially, Byzantine-robust federated learning~\citep{blanchard2017machine} aims to defend malicious users whose goal is to compromise training, e.g., by model poisoning~\citep{bhagoji2018analyzing,fang2020local} or inserting model backdoor~\citep{bagdasaryan2018how}.
Various strategies aim to eliminate the malicious user updates during federated aggregation~\citep{chen2017distributed,blanchard2017machine,yin2018byzantinerobust,pillutla2020robust}.
However, most of them assume the normal users are from similar distributions with enough samples such that the malicious updates can be detected as outliers.
Therefore, these strategies could be less effective on attacker detection given a finite dataset~\citep{wu2020federated}.
Even though both the proposed FRP and Byzantine-robust studies work with robustness, they have fundamental differences: the proposed work focus on \emph{the robustness during inference}, i.e., after the model is learned and deployed, whereas Byzantine-robust work focus on the robust learning process. 
As such, the proposed approach can combine with all Byzantine-robust techniques to provide training robustness.

\section{Background and Federated Robustness Propagation (FRP)}

In this section, we will review AT, present the unique challenges from hardware heterogeneity in FL and formulate the problem of federated robustness propagation (FRP). In this paper, we assume that a dataset $D$ includes sampled pairs of images $x\in \RR^d$ and labels $y\in \RR^c$ from a distribution $\cD$.
Though our discussion limits the data as images in this paper, the method can be easily generalized to other data forms.
We model a classifier, mapping from the $\RR^{d}$ input space to classification logits $f: \RR^{d} \rightarrow \RR^{c}$, by a deep neural network (DNN) with batch-normalization (BN) layers.
Generally, we split the parameters of $f$ into two parts: $(\mu, \sigma^2)$ including all mean and variance in all BN layers and $\theta$ in others.
To specify a BN structure, e.g., $BN_c$ with identity name $c$ in multiple candidates, we use the notation $f(x; BN_c)$.
Whenever not causing confusion, we use the symbol of a model and its parameters interchangeably.
For brevity, we slightly abuse $\Ebb[\cdot]$ for both empirical average and expectation and use $[N]$ to denote $\{1,\dots,N\}$.

\subsection{Standard training and adversarial training}

An \emph{adversarial attack} applies a bounded noise $\delta_\epsilon: \norm{\delta_\epsilon} \le \epsilon$ to an image $x$ such that the perturbed image $A_\epsilon(x) \triangleq x+\delta_\epsilon$ can mislead a well-trained model to give a wrong prediction.
The norm $\norm{\cdot}$ can take a variety of forms, e.g., $L_{\infty}$-norm for constraining the maximal pixel scale.
A model $f$ is said to be \emph{adversarially robust} if it can predict labels correctly on a perturbed dataset $\tilde D = \{(A_\epsilon(x), y) | (x, y)\in D\}$, and the standard accuracy on $D$ should not be greatly impacted.

Consider the general learning objective: $\min \nolimits_f L(f, D) = \Ebb_{(x,y)\in D} [ \ell(f; x,y)]$.
A user performs \emph{standard training (ST)} if $\ell=\ell_c$ is a standard classification loss on clean images, for example, cross-entropy loss $\ell_{\text{CE}}(f(x), y) = - \sum_{t=1}^c y_t \log (f(x)_t)$ where $t$ is the class index and $f(x)_t$ represents the $t$-th output logit.
In contrast, a user performs \emph{adversarial training (AT)} if $\ell=(\ell_a + \ell_{\text{CE}})/2$ where $\ell_a$ is an adversarial classification loss on noised images.
A popular instantiation of $\ell_a$ is based on PGD attacks \citep{madry2018deep,tsipras2019robustness}:
    $\ell_a(f;x,y) = \max_{\norm{\delta} \le \epsilon} \ell(f(x+\delta), y)$,
where $\norm{\cdot}$ is the $L_\infty$-norm.
With $\ell_c$ and $\ell_a$, we can accordingly define $L_{ST}$ and $L_{AT}$.

\subsection{Learning setup and challenges}

We start with a typical FL setting: a finite set of non-identical distributions $\cD_i$ for $i\in [C]$, from which a set of datasets $\{D_k\}_{k=1}^K$ are sampled and distributed to $K$ users' devices.
The users from distinct domains related with $\cD_i$ expect to learn together while optimize different objectives due to resource constraints: Some users can afford AT training (\emph{AT users} from group $S$) whereas the remaining users cannot afford and use standard training (\emph{ST users} from group $T$). 
The goal of \emph{federated robustness propagation (FRP)} is to transfer the robustness from AT users to ST users at minimal computation and communication costs while preserve data locally. Formally, the FRP objective minimizes:
\begin{align}
&\quad \operatorname{FRP}(\{f_k\}; \{D_k | D_k \sim \cD_i\}) \triangleq \textstyle\sum_{k\in T} L_{\text{ST}}(f_k, D_k) + \textstyle\sum_{k\in S} L_{\text{AT}}(f_k, D_k).  \label{eq:FRP}
\end{align}
In the federated setting, each user's model is trained separately when initialized by a global model, and is aggregated to a global model at the end of each epoch.
A popular aggregation technique is FedAvg~\citep{mcmahan2017communicationefficient},
which averages parameters by $f= {1\over K} \sum_{k=1}^K a_k f_k$ with normalization coefficients $a_k$ proportional to $|D_k|$.
The most related setting to our work is FAT \citep{zizzo2020fat}.
But different from FAT, FRP defined in \cref{eq:FRP} formalizes two types of user heterogeneity that commonly exist in FL.
The first one is the \emph{hardware heterogeneity} where users are divided into two groups by computation budgets ($S$ and $T$).
Besides, \emph{data heterogeneity} is represented as $\cD_i$ differing by domain $i$. We limit our discussion as the common feature distribution shift (on $x$) in contrast to the label distribution shift (on $y$), as previously considered in \citep{li2020fedbn}.
\noindent\textbf{New Challenges.}
 We emphasize that \emph{jointly} addressing the two types of heterogeneity in \cref{eq:FRP} forms a new challenge, distinct from either of them considered exclusively.
First, the scarcity of the AT group worsens the data heterogeneity for additional distribution shift in the hidden representations from adversarial noise~\citep{xie2019intriguing}.
That means even if two users are sampled from the same distribution, their classification layers may operate on different distributions.

Second, the data heterogeneity makes the transfer of robustness non-trivial~\citep{shafahi2019adversarially}.
Hendrycks \etal~\cite{hendrycks2019using} discussed the transfer of models adversarially trained on multiple domains and massive samples.
Later, Shafahi~\etal~\cite{shafahi2019adversarially} firstly studied the transferability of adversarial robustness from one data domain to another by fine-tuning.  %
Distinguished from all existing work, the FRP problem focuses on propagating robustness from multiple AT users to multiple ST users who have diverse distributions and participate in the same federated learning.
Thus, fine-tuning all source models in ST users is often not possible due to prohibitive computation costs.

\section{Method: Federated Robust Batch-Normalization (FedRBN)}

To address the challenges in FRP, we propose a novel federated learning method that propagates robustness using batch-normalization (BN). Recall that BN mitigates the layer distributional shifts and greatly stabilizes the training of very deep networks \cite{ioffe2015batch}.
A BN layer maps a biased variable to a normalized one by
\begin{align}
    \text{BN}(x; \mu, \sigma) \triangleq w \tfrac{x - \mu}{\sqrt{\sigma^2+\epsilon_0}} + b, \label{eq:BN}
\end{align}
where $\mu$ and $\sigma^2$ are the estimated mean and variance over all non-channel dimensions, and $\epsilon_0$ is a small value to avoid zero division.
Since $w$ and $b$ are not distribution-dependent but trainable parameters, we omit them from the notation $\text{BN}(x; \mu, \sigma)$, for brevity.

\subsection{The role of BN revisited}

It is known that batch-normalization can model the internal distributions of activations and mitigate the distribution shifts by normalization.
Therefore, it has been applied to cases where data distribution shifts occur.
The basic principle is to apply \emph{different BNs for different distributions}, by which the output of BN will be distributionally aligned.
In this paper, two kinds of distribution biases are of our interest and their corresponding mitigation methods can be unified into the same principle:
\textbf{(1)} \underline{Feature biases and LBN.}
When users collect data from different sources, their data consists of features biased by different environments.
Though locally trained BNs tend to characterize the biases, the differences captured are immediately forgotten by a global averaging, for instance, in FedAvg.
With the insight, FedBN \cite{li2020fedbn} adopts localized batch-normalization (LBN) for each user, which will be eliminated from the global averaging.
Thus, FedBN outputs $K$ models with LBNs: $\{(\theta, \mu_k, \sigma^2_k)\}_{k=1}^K$.
\textbf{(2)} \underline{Adversarial biases and DBN.}
Recently, \cite{xie2019intriguing} showed that adversarial samples are distributionally biased from clean samples especially in the internal activations of DNN, although the biases are almost invisible in the image.
Such biases substantially lower robustness gained from adversarial training.
Thus, Xie \etal~\cite{xie2020adversarial} proposed a dual batch-normalization (DBN) structure which redirects noised and cleans inputs to different BNs during training: $\text{BN}_a(x; \mu_a, \sigma_a^2)$ given a adversarially-noised $x$ and $\text{BN}_c(x; \mu, \sigma^2)$ given clean $x$.
For example, the adversarial training will instead optimize $\ell_c(f(x; \text{BN}_c)) + \ell_a(f(x; \text{BN}_a))$.
After training, it is recommended to use $\text{BN}_a$ for improved robustness.
Though not as accurate as $\text{BN}_c$, $\text{BN}_a$ are still accurate.

\textbf{Joint use of LBN and DBN in FRP.}
Because of the co-occurrence of feature heterogeneity and adversarial training in FRP, it is natural to adopt both LBN and DBN in FL.
We name the combination as FATBN+DBN.
That admits an extended set of BN parameters, $(\mu_k, \sigma^2_k)$ (clean), and $(\mu_{a,k}, \sigma^2_{a,k})$ (adversarial), for user $k$.
Since the essence of LBN and DBN are well established, it should be natural to use them together when two kinds of biases present.
Interested readers may be referred to \cref{sec:app:dist_bias_and_bn} for qualitative evidence of such essence.
Later in benchmark experiments (c.f. \cref{tbl:bmk_single_source_prop}), we also show that the joint use boosts the robustness than using one of them exclusively.

However, the accuracy boosting comes with the challenges for FRP, when ST users cannot afford the AT due to the limited computation resources.
Without globally aggregating DBNs, ST users have to leave one branch of DBN blank or random, because no adversarial samples are provided to tune them.
The missing branch makes the ever-successful method inapplicable with the device heterogeneity.
Thus, an efficient manner without heavy computation overhead is desired to fill the gap.

\subsection{BN-based propagation}

To address the problem, we propose a simple estimation of the missing BN$_a$ with global averaging:
\begin{align}
    \textstyle \hat \mu_{a,k} = {1\over |S|} \sum_{j \in S} \alpha_j \mu_{a,j}, ~\hat \sigma^2_{a,k} = {1\over |S|} \sum_{j \in S} \alpha_j \sigma^2_{a,j}, \label{eq:ABN}
\end{align}
where $\alpha_j$ is a normalized weight.
As \cref{eq:ABN} is simply a linear operation, the estimation is very efficient due to the small portion of BN parameters in a deep network.
To find an ideal $\alpha$ minimizing the adversarial loss during inference, below we theoretically show that the divergence of a clean pair bounds the generalizable adversarial loss, given bounded adversarial bias.

\begin{lemma}[Informal state of \cref{thm:multi_src_domain_formal}] \label{thm:multi_src_domain}
    Suppose the divergence between any data distribution $\cD$ and its adversarial one $\tilde \cD$ is bounded by a constant, i.e., $d_{\cH \Delta \cH} (\tilde \cD, \cD) \le d_{\epsilon}$ where $d_{\cH \Delta \cH}$ is $\cH \Delta \cH$-divergence in hypothesis space $\cH$.
    If a target model is formed by \cref{eq:ABN} of models trained on a set of source standard datasets $\{D_{s_i}\}$, its generalization error on the target $\tilde \cD_t$ is upper bounded by the weighted summation $\sum_i \alpha_i d_{\cH \Delta \cH} (D_{s_i}, D_t)$ of paired standard divergence given $D_t\sim \cD_t$. %
\end{lemma}

The lemma extends an existing bound for federated domain adaptation \citep{peng2019federated}, and shows that the generalization error on the unseen target noised distribution $\tilde D_t$ is bounded by the $\alpha_i$-weighted standard distribution gaps.

\textbf{New client similarity measure for adaptive propagation.}
Results in \cref{thm:multi_src_domain} and the domain gaps between adversarial samples motivate us 
to set $\alpha_j$ to be reversely proportional to the divergence between $D_k$ and $D_j$.
Since other users' data are not available, directly modeling the divergence is by data is prohibitive.
Fortunately, as clean BN statistics characterize each user's data distributions, we can use a layer-averaged similarity to approximate the weight, i.e.,
\begin{align}
    \textstyle \alpha_j = \text{Softmax}_T \bigl[ \tfrac{1}{L} \sum \nolimits_{l=1}^L \text{Sim}^l(D_k, D_j) \bigr],
\end{align}
where $\text{Softmax}_T(q_j)$ is a tempered softmax function: $\exp(q_j/T)/\sum_{j\in S} \exp(q_j/T)$. 
$T$ equals $0.01$ by default in this paper.
The $l$-th-layer similarity is approximated by the BN statistics: $\text{Sim}^l(D_k, D_j) = [\cos(\mu_k^l, \mu_j^l) + \cos({\sigma_k^2}^l, {\sigma_j^2}^l)]/2$ given $\cos(x,y)=x^\top y/\norm{x}\norm{y}$.

\textbf{Non-reducible gap in BN and clean adaption of non-BN parameters.}
\cref{thm:multi_src_domain} suggests that the optimal divergence will be no better than the divergence by the model from the most similar source.
When all source datasets are from domains distinguished from the target domain, then estimated BN parameters by \cref{eq:ABN} cannot further compress divergence and improve adversarial losses.
In \cref{fig:pen_fea_viz_BNa_FRP-full-OtoM}, we show the non-reducible domain gap between MNIST and SVHN: the transferred $\text{BN}_a$ yields a much less discriminative representations than the locally trained $\text{BN}_a$ during training.
\begin{wrapfigure}{r}{0.5\textwidth}
    \vskip -0.1in
        \centering
        \includegraphics[width=0.49\textwidth]{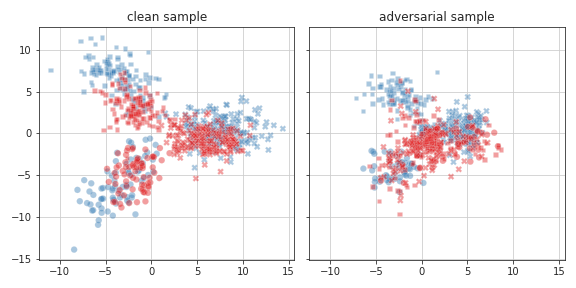}
    \vskip -0.1in
    \caption{Penultimate layer representations visualized by a Digits model and SVHN-domain users. The visualization projects 400 randomly-selected samples into the first three classes of SVHN datasets following \cite{muller2019when}. 
    Representations are computed by \textcolor{lightblue}{trained} or \textcolor{lightred}{transferred} $\text{BN}_a$. The model is trained by \textcolor{lightblue}{100\%-AT} or \textcolor{lightred}{1-domain-AT} user. In the latter setting, $\text{BN}_a$ is propagated according to \cref{eq:ABN}.
    More figures in \cref{fig:pen_fea_viz_ex}.
    }
    \label{fig:pen_fea_viz_BNa_FRP-full-OtoM}
    \vskip -0.1in
\end{wrapfigure}
In addition, we surprisingly observe that the clean discrimination is not well transferred, either.
The observation implies that though non-BN parameters are trained and adapted towards different domains, the single-domain $\text{BN}_a$ still cast biases into the representations even for clean samples.
To fix this, we propose a clean adaptation of the federated model, which calibrates non-BN parameters by local clean features only:
(1) Given the estimated $(\hat \mu_{a,k}, \hat \sigma^2_{a,k})$, keep the two parameters \emph{frozen} to avoid statistic interference from clean samples.
As the distributional biases between domains are typically larger than that between clean and adversarial statistics, freezing $\text{BN}_a$ can impede catastrophic forgetting of the critical robustness knowledge. 
(2) Optimize an augmented ST loss:
\begin{align}
    \textstyle (1-\lambda) \ell_{\text{CE}}\big(f_k(x; \text{BN}_c),y\big) + \lambda \ell_{\text{CE}}\big(f_k(x; \text{BN}_a), y \big), \label{eq:PNC_loss}
\end{align}
where the second term, pseudo-noise calibration (PNC) loss, augments the robustness by $\text{BN}_a$ without computation-intensive adversarial attacks. %
If without the domain gap, $f_k(x; \text{BN}_a)$ will bias the outputs on clean input $x$, which functions like noising the training process.
Otherwise, frozen $\text{BN}_a$ can calibrate the other parameters to mitigate the distributional bias such that the robustness encoded in $\text{BN}_a$ is transferable.
In \cref{eq:PNC_loss}, the hyper-parameter $\lambda$ is set to be $0.5$ by default, which provides a fair trade-off between robustness and accuracy.
A smaller $\lambda$ can be used to trade in robustness for accuracy, or vice versa.

\begin{minipage}{0.5\textwidth}
\begin{algorithm}[H]
	\centering
	\small
	\caption{FedRBN: user-end training}
	\label{alg:user_train}
	\begin{algorithmic}[1]
	\Require User budget type (AT or ST), initial parameters $\theta$ (AT) or $(\theta, \hat \mu_a, \hat \sigma^2_a)$ (ST) of the model $f$ from the server, adversary $A_{\epsilon}(\cdot)$, dataset $D$
    \For{mini-batch $\{(x, y)\}$ in $D$}
        \State $\ell_c \leftarrow \Ebb_{(x,y)} [ \ell_{\text{CE}}(f(x; \text{BN}_c),y)]$
        \State Update $(\mu, \sigma^2)$ of $\text{BN}_c$
        \If{user budget type is AT}
            \State Perturb data $\tilde x \leftarrow A_{\epsilon}(f(x; \text{BN}_a))$
            \State $L \leftarrow {1\over 2} \left\{ \ell_c + \Ebb_{(\tilde x,y)} [ \ell_{\text{CE}}(f(\tilde x; \text{BN}_a),y)] \right\}$
            \State Update $(\mu_a, \sigma^2_a)$ of $\text{BN}_a$
        \Else
            \State \textcolor{blue}{Replace $\text{BN}_a$ parameters with $(\hat \mu_a, \hat \sigma^2_a)$}
            \State \textcolor{blue}{$L \leftarrow (1-\lambda) \ell_c$} \\
            \hspace{0.5in}\textcolor{blue}{$+ \lambda \Ebb_{(x,y)} [ \ell_{\text{CE}}(f(x; \text{BN}_a),y)]$}
        \EndIf
        \State Optimize $L$ to update $\theta$ by gradient descent
    \EndFor
    \State \textbf{Upload} $(\theta, \mu, \sigma^2, \mu_a, \sigma_a^2)$ (AT) or $(\theta, \mu, \sigma^2)$ (ST)
	\end{algorithmic}
\end{algorithm}
\end{minipage}
\hfill
\begin{minipage}{0.49\textwidth}
\begin{algorithm}[H]
	\centering
	\small
	\caption{FedRBN: server-end training}
	\label{alg:svr_train}
	\begin{algorithmic}[1]
	\Require An initial model $f$ with BN parameters $(\hat \mu_{a}, \hat \sigma^2_{a})$ and other non-BN parameters $\theta$, $K$ users belonging to $S$ (AT) or $T$ (ST) sets, total iteration number $\tau$
    \For{$t \in \{1, \dots, \tau\}$}
        \State Send global model $\theta_k$ to users indexed by $k\in S$ and $(\theta_k^t, \hat \mu_{a,k}, \hat \sigma^2_{a,k})$ to users indexed by $k \in T$
        \State In parallel, users train their models by \cref{alg:user_train}
        \State Receive users' parameters: $$\hspace{-0.2in}\{(\theta_k, \mu_k, \sigma^2_k)\}_{k\in T}\text{ and }\{(\theta_k, \mu_k, \sigma^2_k, \mu_{k,a}, \sigma^2_{k,a})\}_{k\in S}$$
        \State Average parameters: $\theta \leftarrow{1\over K} \sum_{k=1}^K \theta_k$
        \State \textcolor{blue}{Use $\{(\mu_k, \sigma^2_k, \mu_{k,a}, \sigma^2_{k,a})\}_{k\in S}$ to estimate adversarial BN parameters $\{(\hat \mu_{k,a}, \hat \sigma^2_{k,a})\}_{k\in T}$ by \cref{eq:ABN}}
    \EndFor
    \State \textbf{Return} $K$ models parameterized by $\{(\theta, \mu_k, \sigma^2_k, \mu_{k,a}, \sigma^2_{k,a})\}_k$
	\end{algorithmic}
\end{algorithm}
\end{minipage}

\subsection{FedRBN algorithm and its efficiency}

We are now ready to present the proposed BN-based FRP algorithm: Federated Robust Batch-Normalization (FedRBN).
On the user side (\cref{alg:user_train}), we introduce a standard loss in addition to the standard federated adversarial training.
The loss is embarrassingly simple and easy to implement in two lines, as highlighted.
On the server side (\cref{alg:svr_train}), we follow the same practice as FedAvg to aggregate models and average (perhaps weighted if users' sample sizes differs).
Different from FedAvg, we drop unnecessary parameter sharing like sending BN parameters to AT users and leverage the globally shared BN parameters to estimate missing $\text{BN}_a$ parameters.

\textbf{Efficiency and privacy of BN operations}.
Since BN statistics are only a tiny portion of any networks and do not require back-propagation, an additional set of BN statistics will marginally impact the efficiency \citep{wang2020onceforall}. %
During training, the communication cost is almost the same as the most popular FL method, FedAvg \citep{mcmahan2017communicationefficient}, with a small portion of additional BN parameters.
On the user side, the major computation overhead comes from the additional loss, which doubles the complexity of a ST user.
However, the overhead is much cheaper than adversarial training, which typically requires multiple iterations (e.g., 7 steps \cite{madry2018deep}) of gradient descent for attacks.
Many existing FL designs such as FedAvg have privacy concerns~\citep{li2020federateda,fallah2020personalized}, and sharing local statistics can also contribute to potential privacy leakage~\citep{geiping2020inverting}. 
Though not the scope of this work, we can implement protection by applying differential privacy mechanism \cite{dwork2006calibrating} on the BN statistic estimation, where a minor Gaussian noise is injected on every statistic update in \cref{alg:user_train}.

\vspace{-0.1in}
\section{Experiments}
\label{sec:exp}
\vspace{-0.1in}

\textbf{Datasets and models}.
To implement a non-iid scenario, we adopt a close-to-reality setting where users' datasets are sampled from different distributions. We used two multi-domain datasets for the setting.
The first is a subset (30\%) of \textsc{Digits}, a benchmark for domain adaption~\citep{peng2019federated}.
\textsc{Digits} has $28\times28$ images and serves as a commonly used benchmark for FL~\citep{caldas2019leaf,mcmahan2017communicationefficient,li2020federateda}. 
\textsc{Digits} includes $5$ different domains: MNIST (MM) \citep{lecun1998gradientbased}, SVHN (SV) \citep{netzer2011reading}, USPS (US) \citep{hull1994database}, SynthDigits (SY) \citep{ganin2015unsupervised}, and MNIST-M (MM) \citep{ganin2015unsupervised}.
The second dataset is \textsc{DomainNet}~\citep{peng2019moment} processed by \citep{li2020fedbn}, which contains 6 distinct domains of large-size $256\times 256$ real-world images: Clipart (C), Infograph (I), Painting (P), Quickdraw (Q), Real (R), Sketch (S).
For \textsc{Digits}, we use a convolutional network with BN (or DBN) layers following each conv or linear layers. 
For the large-sized \textsc{DomainNet}, we use AlexNet \citep{krizhevsky2012imagenet} extended with BN layers after each convolutional or linear layer following prior non-iid FL practice~\citep{li2020fedbn}.
\preprintrm{One more large-sized image dataset is presented in \cref{sec:office_results}.}

\textbf{Training and evaluation}. %
For AT users, we use $n$-step PGD (projected gradient descent) attack \citep{madry2018deep} with a constant noise magnitude $\epsilon$.
Following \citep{madry2018deep}, we use $\epsilon=8/255$, $n=7$, and attack inner-loop step size $2/255$, for training, validation, and test. %
We uniformly split the dataset for each domain into $10$ subsets for \textsc{Digits} and $5$ for \textsc{DomainNet}, following \citep{li2020fedbn}, which are distributed to different users, respectively.
Accordingly, we have $50$ users for \textsc{Digits} and $30$ for \textsc{DomainNet}.
Each user trains local model for one epoch per communication round.
We evaluate the federated performance by \underline{standard accuracy} (SA), classification accuracy on the clean test set, and \underline{robust accuracy} (RA), classification accuracy on adversarial images perturbed from the original test set.
All metric values are averaged over users.
We defer other details of experimental setup such as hyper-parameters to \cref{sec:app:exp}, and focus on discussing the results.

\vspace{-0.5em}
\subsection{Comprehensive study}
\vspace{-0.5em}
To further understand the role of each component in FedRBN, we conduct a comprehensive study on its properties.
In experiments, we use three representative federated baselines combined with AT: FedAvg~\citep{mcmahan2017communicationefficient}, FedProx~\citep{li2020federateda}, and FedBN~\citep{li2020fedbn}.
We use FATAvg to denote the AT-augmented FedAvg, and similarly FATProx and FATBN. To implement hardware heterogeneity, we let $20\%$-per-domain users from $3/5$ domains (of \textsc{Digits}) conduct AT.

\textbf{Ablation Studies}.
We study how BN should be used at inference time when LBN and DBN are already integrated into federated training.
Thus, we evaluate trained models with users' local $\text{BN}_c$ and $\text{BN}_a$ transmitted from the global estimation.
We also compare two kinds of weighting strategy for estimating transferable $\text{BN}_a$ parameters: uniform weights (uni) or the proposed cosine-similarity-based weights for soruce users.
In \cref{tbl:ablation_comp}, we present the results with $\lambda \in \{0,0.5\}$ for PNC losses.
\begin{wraptable}{r}{0.55\textwidth}
  \caption{Ablation of different test-time BNs.}
  \label{tbl:ablation_comp}
  \tiny
  \setlength\tabcolsep{1.5 pt}
  \centering
  \begin{tabular}{*{3}{c}|*{18}{c}}
    \toprule
   $\lambda$ & test BN & weight & \multicolumn{6}{c}{Digits} & \multicolumn{6}{c}{DomainNet} \\
    \cmidrule(r){4-9} \cmidrule(r){10-15}
   & &  & \multicolumn{2}{c}{All} & \multicolumn{2}{c}{20\%} & \multicolumn{2}{c}{MNIST} & \multicolumn{2}{c}{All} & \multicolumn{2}{c}{20\%} & \multicolumn{2}{c}{Real} \\
    \cmidrule(r){4-5} \cmidrule(r){6-7} \cmidrule(r){8-9} 
    \cmidrule(r){10-11} \cmidrule(r){12-13} \cmidrule(r){14-15}
   & &  & RA & SA & RA & SA & RA  & SA & RA & SA & RA & SA & RA & SA \\
    \midrule
     0 & $\text{BN}_c$      & & 52.8 & \textbf{86.7} & 41.9 & 86.6 & 34.6 & 84.7 & 35.5 & 61.4 & \textbf{22.1} & \textbf{65.0} & 15.4 & \textbf{65.9} \\
     0 & tran. $\text{BN}_a$ & uni & \textbf{62.0} & 84.9 & 50.6 & 83.2 & \textbf{41.5} & 80.2 & \textbf{35.7} & \textbf{61.6} & 19.8 & 60.5 & 13.2 & 56.1 \\
     0 & tran. $\text{BN}_a$ & cos & \textbf{62.0} & 84.9  & \textbf{51.0} & 83.5 &  \textbf{41.5} & 80.2 & \textbf{35.7} & \textbf{61.6} & 21.4 & 62.5 & 12.8 & 56.1  \\
    \midrule
     0.5 & $\text{BN}_c$      & & 52.8 & \textbf{86.7}  & 50.0 & 87.0 & 42.2 & 84.1 & 35.5 & 61.4 & 26.5  & 61.2 & 21.0 & 62.0  \\
     0.5 & tran. $\text{BN}_a$ & uni & \textbf{62.0} & 84.9   & 55.4 & 86.9 & 51.5 & 87.2 & \textbf{35.7} & \textbf{61.6} & 27.5 & 61.3 & \textbf{26.4} & \textbf{64.0}  \\
     0.5 & tran. $\text{BN}_a$ & cos & \textbf{62.0} & 84.9   & \textbf{55.8} &          \textbf{87.3}   & \textbf{58.5} &          \textbf{86.5}  & \textbf{35.7} & \textbf{61.6}  &  \textbf{28.1} &     \textbf{62.5}       & \textbf{26.4} &  63.9    \\
    \bottomrule %
  \end{tabular}
\end{wraptable}
When $\lambda=0$, we only share robustness through customizing $\text{BN}_a$ for each target ST user without PNC losses and the propagated BNs is more effective on the Digits than on DomainNet, because DomainNet is a more complicated task involving higher domain divergence.
As the domain gap overwhelms the gap between adversarial samples and clean samples (also see representation comparison in \cref{fig:bn_pca_ex}), the $\text{BN}_c$ outperforms the $\text{BN}_a$ surprisingly on RA.
As we formerly discussed, the non-reducible domain gap in adversarial training motivates our development of PNC loss.
With PNC loss ($\lambda=0.5$), we significantly improves the robustness and accuracy and the performance approaches the all-AT results.
In addition, either with or without PNC losses, the cos-weighting strategy consistently improves the robustness compared to non-informative uniform weights. \\
\textbf{Impacts from data heterogeneity}.
To study the influence of different AT domains, we set up an experiment where AT users only reside on one single domain.
For simplicity, we let each domain contains a single user as in \citep{li2020fedbn} and utilize only 10\% of \textsc{Digits} dataset.
The single AT domain plays the central role in gaining robustness from adversarial augmentation and propagating to other domains.
The task is hardened by the non-singleton of gaps between the AT domain and multiple ST domains and a lack of the knowledge of domain relations.
Results in \cref{fig:benchmark_O2M} show the superiority of the proposed FedRBN, which improves RA for more than $10\%$ in all cases with small drops in SA.
We see that RA is the worst when MNIST serves as the AT domain, whereas RA propagates better when the AT domain is SVHN or SynthDigits.
A possible explanation is that SVHN and SynthDigits are more visually distinct than the rest domains (see \cref{fig:dataset_samples}), forming larger domain gaps.
\newline
\textbf{Impacts from hardware heterogeneity}.
We vary the number of AT users in training from $1/N$ (most heterogeneous) to $N/N$ (homogeneous) to compare the robustness gain.
\cref{fig:benchmark_partial_M2M} shows that our method consistently improves the robustness.
Even when all domains are noised, FedRBN is the best due to the use of DBN.
When not all domains are AT, our method only needs half of the users to be noised such that the RA is close to the upper bound (fully noised case).
\newline
\textbf{Other comprehensive studies in Appendix} for interested readers.
Concretely, we studied the $\lambda$-governed trade-off (\ref{sec:app:trade_off_pnc}), the convergence curves (\ref{sec:conv_hyper}), \preprintrm{more datasets (\ref{sec:office_results}), other models (\ref{sec:app:resnet}),} detailed ablation studies of FL configurations (\ref{sec:app:fl_conf})\preprintrm{, and domain-wise analysis of accuracy (\ref{sec:app:domain})}.

\begin{figure*}[!b]
    \vspace{-0.2in}
    \centering
    \begin{subfigure}{0.49\textwidth}
        \centering
        \includegraphics[width=0.49\textwidth]{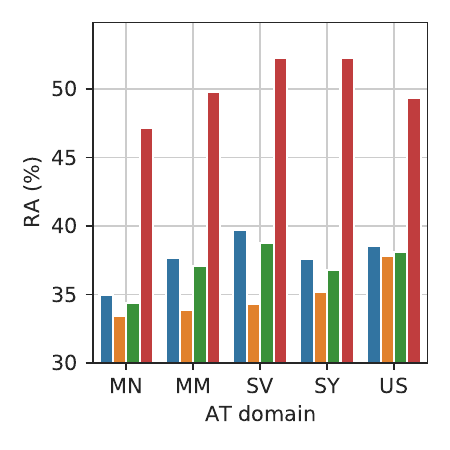}
        \hfil
        \includegraphics[width=0.49\textwidth]{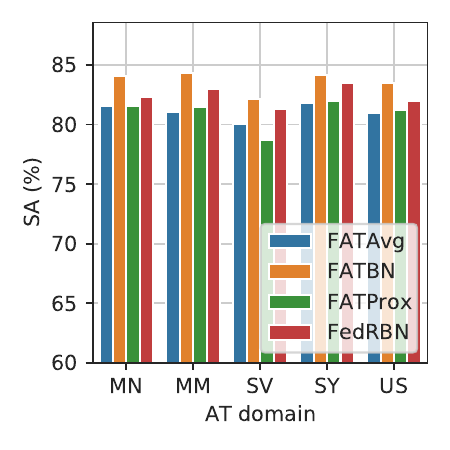}
        \caption{FRP from a single AT domain}
        \label{fig:benchmark_O2M}
    \end{subfigure}
    \begin{subfigure}{0.49\textwidth}
        \centering
        \includegraphics[width=0.49\textwidth]{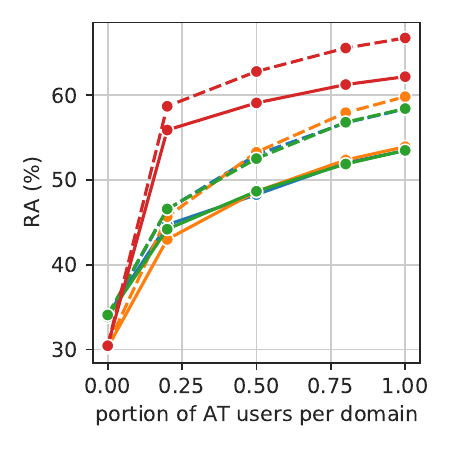}
        \hfil
        \includegraphics[width=0.49\textwidth]{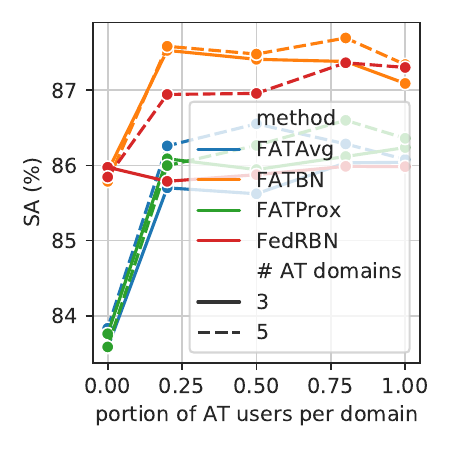}
        \caption{FRP from partial AT users per domain}
        \label{fig:benchmark_partial_M2M}
    \end{subfigure}
    \vskip -0.1in
    \caption{Evaluating FRP performance with different FRP settings.
    }
    \label{fig:frp_varying}
    \vspace{-0.2in}
\end{figure*}

\vspace{-0.5em}
\subsection{Comparison to baselines}
\label{sec:comp_baselines}
\vspace{-0.5em}
To demonstrate the effectiveness of the proposed FedRBN, we compare it with baselines on two benchmarks.
We repeat each experiment for three times with different seeds.
We introduce two more baselines:
a proposed baseline combining FATAvg with DBN,
personalized meta-FL extended with FAT (FATMeta)~\citep{fallah2020personalized} and federated robust training (FedRob)~\citep{reisizadeh2020robust}.
Because FedRob requires a project matrix of the squared size of image and the matrix is up to $256^2\times256^2$ on \textsc{DomainNet} which does not fit into a common GPU, we exclude it from comparison.
Given the same setting, we constrain the computation cost in the similar scale for cost-fair comparison.
We evaluate methods on two FRP settings.
\textbf{1) Propagate from a single domain}.
In reality, a powerful computation center may join the FL with many other users, e.g., mobile devices.
Therefore, the computation center is an ideal node for the computation-intensive AT.
Due to limitations of data collection, the center may only have access to a single domain, resulting gaps to most other users.
We evaluate how well the robustness can be propagated from the center to others.
\textbf{2) Propagate from a few multi-domain AT users}.
In this case, we assume that to reduce the total training time, ST users are exempted from the AT tasks in each domain.
Thus, an ST user wants to gain robustness from other same-domain users,
but the different-domain users may hinder the robustness due to the domain gaps in adversarial samples. \\
\textbf{Benchmark.}
\cref{tbl:bmk_single_source_prop} shows that our method outperforms all baselines for all tasks, while it associates to only small overhead (for optimizing PNC losses) compared to the full-AT case.
Importantly, we show that only $20\%$ users and less than 33\% time complexity of the full-AT setting are enough to achieve robustness comparable to the best fully-trained baseline.
Contradicting FATAvg+DBN and FATBN confirmed the importance of DBN in robustness but also show its limitation on handling data heterogeneity.
Thus, FedRBN ($\lambda=0$) is proposed to simultaneously address data and hardware heterogeneity by efficiently propagating robustness through BNs. 
To fully exploit the robustness complying users' hardware limitations, the PNC loss ($\lambda>0$) is used and improves the robustness significantly.
When $\lambda=1$, the trained inclines to be more robust but less accurate on clean samples.
Instead, $\lambda=0.5$ provides a fairly nice trade-off between accuracy and robustness, for which we use the parameter generally. \\
\textbf{Stronger attacks.} To fully evaluate the robustness, we experiment with more attack methods, including MIA \citep{dong2018boosting}, AutoAttack (AA) \citep{croce2020reliable} and LSA \citep{narodytska2016simple}.
A strong score-based blackbox attacks such as Square Attack \citep{andriushchenko2020square} (included in AA) can avoid the trip fake robustness due to obfuscated gradient 
Even evaluated by different attacks (see \cref{tbl:eval_attacks}), our method still outperforms others.
\\
\textbf{Compare to full efficient AT}.
In \cref{tbl:bmk_single_source_prop_free}, we show that when computation time is comparable, our method can achieve both better RA and SA than full-AT baselines.
For results to be comparable, we train FedRBN for limited 150 epochs while Free AT for 300 epochs.
Although Free AT improves the robustness compared to FATAvg, it also greatly sacrifices SA performance.
Thanks to stable convergence and decoupled BN, FedRBN maintains both accurate and robust performance though the AT is not `free' for a few users.

\begin{table*}[t!]
  \vskip -0.1in
  \caption{Benchmarks of robustness propagation, where we measure the per-epoch computation time ($T$) by counting $\times 10^{12}$ times of multiplication-or-add operations (MACs) to evaluate the \textbf{efficiency}.}
  \label{tbl:bmk_single_source_prop}
  \scriptsize
  \setlength\tabcolsep{1.5 pt}
  \centering
  \begin{tabular}{l*{2}{c}*{18}{c}}
    \toprule
    & LBN & DBN & \multicolumn{9}{c}{Digits} & \multicolumn{9}{c}{DomainNet} \\
    \cmidrule(r){4-12} \cmidrule(r){13-21}
  AT users & &  & \multicolumn{3}{c}{All} & \multicolumn{3}{c}{20\%} & \multicolumn{3}{c}{MNIST} & \multicolumn{3}{c}{All} & \multicolumn{3}{c}{20\%} & \multicolumn{3}{c}{Real} \\
    \cmidrule(r){4-6} \cmidrule(r){7-9} \cmidrule(r){10-12} 
    \cmidrule(r){13-15} \cmidrule(r){16-18} \cmidrule(r){19-21}
  Metrics & &  & RA & SA & T & RA & SA & T & RA  & SA & T & RA & SA & T & RA & SA & T & RA & SA & T \\
  \midrule
  FedRBN $\lambda=1$  & \checkmark & \checkmark & \textbf{62.0} & 84.9 & 7.4   & \textbf{60.6} &          86.5 &    2.5     & \textbf{60.8} &         83.9  &       2.5  & \textbf{35.7} & \textbf{61.6} &  127.9    & 27.6 &     56.0    &    42.6   & \textbf{28.2} &      58.3    &      39.1  \\ %
  FedRBN $\lambda=0.5$  & \checkmark & \checkmark  & \textbf{62.0} & 84.9 & 7.4    & 55.8 &          \textbf{87.3} &     2.9    & 58.5 &          \textbf{86.5} &       2.9  & \textbf{35.7} & \textbf{61.6} & 127.9  & \textbf{28.1} &     62.5   &  51.2     & 26.4 &  63.9  &    48.0   \\
  FedRBN $\lambda=0$  & \checkmark & \checkmark & \textbf{62.0} & 84.9 & 7.4 & 51.0 & 83.5 & 2.2 &  41.5 & 80.2 & 2.2 & \textbf{35.7} & \textbf{61.6} & 127.9 & 21.4 & 62.5 & 38.4 & 12.8 & 56.1 & 34.6  \\
  \midrule
  FATAvg+DBN  &  & \checkmark & 60.0 & 83.8 & 7.4 & 48.8 & 82.8 & 2.2 & 40.2 & 79.9 & 2.2 & 27.6 & 52.8 & 127.9 & 16.6 & 58.9 & 38.4 & 13.0 & 54.8 & 34.6  \\
  FATBN  & \checkmark &   &          60.0 & \textbf{87.3} &       7.4 &          41.2 & 86.4 &        2.2 &          36.5 & 86.4 &        2.2 &          35.2 &          60.2 &      127.9 &          20.3 & \textbf{63.2} &      38.4 &          15.7 & \textbf{64.7} &      34.6 \\
  FATAvg   & &   &          58.3 &          86.1 &       7.4 &          42.6 &          84.6 &        2.2 &          38.4 &          84.1 &        2.2 &          24.6 &          47.4 &      127.9 &          15.4 &          57.8 &      38.4 &          10.7 &          57.9 &      34.6 \\
  FATProx  & & &          58.5 &          86.3 &       7.4 &          42.8 &          84.5 &        2.2 &          38.1 &          84.1 &        2.2 &          24.8 &          47.1 &      127.9 &          14.5 &          57.3 &      38.4 &          10.4 &          57.1 &      34.6 \\
  FATMeta  & & &          43.6 &          71.6 &       7.4 &          35.0 &          72.6 &        2.2 &          35.3 &          72.2 &        2.2 &           6.0 &          23.5 &      127.9 &           0.0 &          37.2 &      38.4 &           0.1 &          38.1 &      34.6 \\
  FedRob  & &  &          13.1 &          13.1 &       7.4 &          20.6 &          59.3 &       1032 &          17.7 &          48.9 &        645 &             - &             - &          - &             - &             - &          - &             - &             - &          - \\
  \bottomrule
  \end{tabular}
\end{table*}

\begin{minipage}{0.58\textwidth}
\begin{table}[H]
\vspace{-0.25in}
\centering
\tiny
\caption{Evaluation of RA with various attacks on Digits. $n$ and $\epsilon$ are the step number and the magnitude of attack.}
\label{tbl:eval_attacks}
\begin{tabular}{@{ }l|*{6}{c}|c@{ }}
  \toprule
    Attack &         PGD &       PGD &        MIA &       MIA &   AA &           LSA & SA \\
    $(n,~\epsilon)$ & (20,16) & (100,8) & (20,16) & (100,8) & (-, 8) & (7, -) & - \\
    \midrule
    FedRBN & \textbf{42.8} & \textbf{54.5} & \textbf{39.9} & \textbf{52.2} & \textbf{48.3}  &  \textbf{73.5} & 84.2 \\
    FATBN  &          28.6 &          41.6 &          27.0 &          39.7 &   31.0   &         64.0 & \textbf{84.6} \\
    FATAvg &          31.5 &          43.4 &          30.0 &          41.5 &    32.9  &          63.3  & 84.2 \\
  \bottomrule
\end{tabular}
\end{table}
\end{minipage}
\hfill
\begin{minipage}{0.4\textwidth}
\begin{table}[H]
\vskip -0.35in
  \caption{Compare FedRBN versus efficient FAT on Digits.
  }
  \label{tbl:bmk_single_source_prop_free}
  \tiny
  \centering
  \begin{tabular}{@{ }l|*{2}{@{ }c}*{2}{@{ }c}@{ }}
    \toprule
    & \multicolumn{2}{c}{20\% 3/5 AT domains} & \multicolumn{2}{c}{100\% Free AT} \\ %
    & \multicolumn{2}{c}{} &  \multicolumn{2}{c}{ \citep{shafahi2019adversarial} } \\
    & \multicolumn{1}{c}{FedRBN} &
    \multicolumn{1}{c}{FATAvg} &\multicolumn{1}{c}{FATAvg} & \multicolumn{1}{c}{FATBN} \\ %
  \midrule
   RA & \textbf{56.1} & 44.9 & 47.1 & 46.3  \\
   SA & \textbf{86.2} & 85.6 & 63.6 & 57.4 \\ 
   T  & 273 & \textbf{271} & 276 & 276 \\
   \bottomrule
   \end{tabular}
\vskip -0.2in
\end{table}
\end{minipage}

\vskip -0.2in
\section{Conclusion}
\vskip -0.1in
In this paper, we investigate a novel problem setting, federate propagating robustness, and propose a FedRBN algorithm that transfers robustness in FL through robust BN statistics.
Extensive experiments demonstrate the rationality and effectiveness of the proposed method, delivering both generalization and robustness in FL. 
We believe such a client-wise efficient robust learning can broaden the application scenarios of FL to users with diverse computation capabilities.

\bibliographystyle{abbrv}
\bibliography{main}

\clearpage

\appendix

\section{Additional Related Work}
\label{sec:related_appd}

This section reviews additional references in the areas of centralized adversarial learning and robustness transfer. 

\textbf{Efficient centralized adversarial training}.
A line of work has been motivated by similar concerns on the high time complexity of adversarial training.
For example, Zhang~\etal\ proposed to adversarially train only the first layer of a network which is shown to be more influential for robustness~\cite{zhang2019you}.
Free AT~\cite{shafahi2019adversarial} trades in some standard iterations (on different mini-batches) for estimating a cached adversarial attack while keeping the total number of iterations unchanged.
Wong~\etal\ proposed to randomly initialize attacks multiple times, which can improve simpler attacks more efficiently~\cite{wong2019fast}.
Most of existing efforts above focus on speed up the local training by approximated attacks that trade in either RA or SA for efficiency.
Instead, our method relocated the computation cost from budget-limited users to budget-sufficient users who can afford the expansive standard AT.
As result, the computation expense is indeed exempted for the budget-limited users and their standard performance is not significantly influenced.

\textbf{Robustness transferring}.
Our work is related to transferring robustness from multiple AT users to ST users.
For example, a new user can enjoy the transferrable robustness of a large model trained on ImageNet~\cite{hendrycks2019using}.
In order to improve the transferability, some researchers aim to align the gradients between domains by criticizing their distributional difference~\cite{chan2020what}.
A similar idea was adopted for aligning the logits of adversarial samples between different domains~\cite{song2019improving}.
By fine-tuning a few layers of a network, Shafahi~\etal\ shows that robustness can be transferred better than standard fine-tuning~\cite{shafahi2019adversarially}.
Rather than a central algorithm gathering all data or pre-trained models, our work considers a distributed setting when samples or their corresponding gradients can not be shared for distribution alignment.
Meanwhile, a large transferrable model is not preferred in the setting, because of the huge communication cost associating to transferring models between users.
Because of the non-iid nature of users, it is also hard to pick a proper user model, that works well on all source users, for fine-tuning on a target user.

\textbf{Locally adapted models for data heterogeneity}.
In the sense of modeling data heterogeneity, some prior work was done in adapting models for each local user \citep{smith2017federated,arivazhagan2019federated,fallah2020personalized,dinh2020personalized}.
For example, \cite{smith2017federated} studied the linear cases with regularization on the parameters, while we study a more general deep neural networks.
In addition, the work did not consider a data-dependent adversarial regularization for better robustness, but a regularization that is independent from the data.
Similarly, \cite{dinh2020personalized} regularizes the local parameters similar to the global model in $L_2$ distance, and \cite{fallah2020personalized} considers a meta-learning strategy instead.
A simpler method was proposed by \cite{arivazhagan2019federated} to only adapt the classifier head for different local tasks.
Since all the above methods do not adapt the robustness from global to local settings, we first study how the robustness can be propagated among users in this work.
\section{Additional Technical Details of FedRBN}

\subsection{FedRBN training with large $\lambda$}

As discussed in previous papers, BN is critical for stabilizing the convergence deep learning \cite{ioffe2015batch}.
When the source and target BNs are significantly distinguished from each other, then the transferred BN will result in large loss and therefore large gradient during optimization.
On the DomainNet dataset, we observe such great gradient explosion using transferred $\text{BN}_a$ when $\lambda$ is large (e.g., 0.5) in \cref{eq:PNC_loss}, and thus the FedRBN training fails to converge.
Though reducing $\lambda$ to a smaller value like 0.1 can smooth the convergence, it may lower the robustness gain.
Instead, we suggest using a gradient clipping technique to fix the issue and present an alternative user training algorithm in \cref{alg:user_train_clip} to replace \cref{alg:user_train}.
In \cref{alg:user_train_clip}, we highlight the difference from \cref{alg:user_train} and $\operatorname{CLAMP}(x, 10)$ scales the input $x$ by $10x/\norm{x}$ if $\norm{x}>10$.
In addition, we use a accumulative gradient to enable the separation of the gradients of the two losses in \cref{eq:PNC_loss}.

\begin{algorithm}[ht]  %
	\centering
	\small
	\caption{FedRBN: user training with clipping}
	\label{alg:user_train_clip}
	\begin{algorithmic}[1]
	\Require User budget type (AT or ST), initial parameters $\theta$ (AT) or $(\theta, \hat \mu_a, \hat \sigma^2_a)$ (ST) of the model $f$ from the server, adversary $A_{\epsilon}(\cdot)$, dataset $D$, learning rate $\eta$
    \For{mini-batch $\{(x, y)\}$ in $D$}
        \State $\ell_c \leftarrow \Ebb_{(x,y)} [ \ell_{CE}(f(x; \text{BN}_c),y)]$
        \State \textcolor{blue}{Initialize an accumulative gradient: $g\leftarrow 0$}
        \State Update $(\mu, \sigma^2)$ of $\text{BN}_c$
        \If{user budget type is AT}
            \State Perturb data $\tilde x \leftarrow A_{\epsilon}(f(x; \text{BN}_a))$
            \State $L \leftarrow {1\over 2} \left\{ \ell_c + \Ebb_{(\tilde x,y)} [ \ell_{CE}(f(\tilde x; \text{BN}_a),y)] \right\}$
            \State Update $(\mu_a, \sigma^2_a)$ of $\text{BN}_a$
        \Else
            \State Replace $\text{BN}_a$ parameters with $(\hat \mu_a, \hat \sigma^2_a)$
            \State \textcolor{blue}{$L_p \leftarrow \lambda \Ebb_{(x,y)} [ \ell_{CE}(f(x; \text{BN}_a),y)]$}
            \State \textcolor{blue}{$g \leftarrow g + \operatorname{CLAMP}(\frac{\partial L_p}{ \partial \theta}, 10)$}
            \State \textcolor{blue}{$L \leftarrow (1-\lambda) \ell_c$}
        \EndIf
        \State $g\leftarrow g + \frac{\partial L}{\partial \theta}$
        \State $\theta \leftarrow \theta - \eta g$
    \EndFor
    \State \textbf{Upload} $(\theta, \mu, \sigma^2, \mu_a, \sigma_a^2)$ (AT) or $(\theta, \mu, \sigma^2)$ (ST)
	\end{algorithmic}
\end{algorithm}

\subsection{Proof of Lemma \ref{thm:multi_src_domain}}
\label{thm:multi_src_domain:proof}

In this section, we use the notation $D$ for a dataset containing images and excluding labels.
To provide supervisions, we define a ground-truth labeling function $g$ that returns the true labels given images.
So as for distribution $\cD$.

First, in \cref{def:h_delta_h_div}, we define the $\cH\Delta \cH$-divergence that measures the discrepancy between two distributions.
Because the $\cH\Delta \cH$-divergence measures differences based on possible hypotheses (e.g., models), it can help relating model parameter differences and distribution shift.

\begin{definition}
    \label{def:h_delta_h_div}
    Given a hypothesis space $\cH$ for input space $\cX$, the $\cH$-divergence between two distributions $\cD$ and $\cD'$ is $d_{\cH} (\cD, \cD') \triangleq 2 \sup_{S\in \cS_{\cH}} \left| \operatorname{Pr}_{\cD}(S) - \operatorname{Pr}_{\cD'} (S) \right|$ where $\cS_{\cH}$ denotes the collection of subsets of $\cX$ that are the support of some hypothesis in $\cH$.
    The $\cH\Delta \cH$-divergence is defined on the symmetric difference space $\cH\Delta \cH \triangleq \{f(x) \oplus h'(x) | h,h' \in \cH \}$ where $\oplus$ denotes the XOR operation.
\end{definition}

Then, we introduce \cref{ass:bouned_div_adv} to bound the distribution differences caused by adversarial noise.
The reason for introducing such an assumption is that the adversarial noise magnitude is bounded and the resulting adversarial distribution should not differ from the original one too much.
Since all users are (or expected to be) noised by the same adversarial attacker $A_\epsilon (\cdot)$ during training, we can use $d_\epsilon$ to universally bound the adversarial distributional differences for all users.

\begin{assumption} \label{ass:bouned_div_adv}
Let $d_{\epsilon}$ be a non-negative constant governed by the adversarial magnitude $\epsilon$. For a distribution $\cD$, the divergence between $\cD$ and its corresponding adversarial distribution $\tilde \cD \triangleq \{ A_\epsilon(x) | x\sim \cD\}$ is bounded as $d_{\cH \Delta \cH} (\tilde \cD, \cD) \le d_{\epsilon}$.
\end{assumption}

Now, our goal is to analyze the \emph{generalization error} of model $\tilde f_t$ on the target adversarial distribution $\tilde \cD$, i.e., $L(\tilde f_t, \tilde \cD_t) = \Ebb_{\tilde x\sim \tilde \cD_t} [ | \tilde f_t(\tilde x) - g(\tilde x) | ]$.
Since we estimate $\tilde f_t$ by a weighted average, i.e., $\sum_i \alpha_i \tilde f_{s_i}$ where $\tilde f_{s_i}$ is the robust model on $D_{s_i}$, we can adapt the generalization error bound from \cite{peng2019federated} for adversarial distributions.
For consistency, we assume the AT users reside on the \emph{source} clean/noised domains while ST users reside on the \emph{target} clean/noised domains.
Without loss of generality, we only consider one target domain and assume one user per domain.

\begin{theorem}[Restated from Theorem 2 in \cite{peng2019federated}]
    \label{thm:fada}
    Let $\cH$ be a hypothesis space of $VC$-dimension $d$ and $\{\tilde D_{s_i}\}_{i=1}^N$, $\tilde D_t$ be datasets induced by samples of size $m$ drawn from $\{\tilde \cD_{s_i}\}_{i=1}^N$ and $\tilde \cD_t$, respectively.
    Define the estimated hypothesis as $\tilde f_t \triangleq \sum_{i=1}^N \alpha_i \tilde f_{s_i}$.
    Then, $\forall \alpha \in \RR^N_+$, $\sum_{i=1}^N \alpha_i = 1$, with probability at least $1 - p$ over the choice of samples, for each $f\in \cH$,
    \begin{align}
        L(f, \tilde \cD_t) &\le L(\tilde f_t, \tilde D_s) \notag \\
        &+ \sum\nolimits_{i=1}^N \alpha_i \left( {1\over 2} d_{\cH \Delta \cH} (\tilde \cD_{s_i}, \tilde \cD_t) + \tilde \xi_i \right) + C,
        \label{eq:fada}
    \end{align}
    where $C= 4 \sqrt{2d \log (2Nm) + \log(4/p) \over Nm}$, $\tilde \xi_i$ is the loss of the optimal hypothesis on the mixture of $\tilde D_{s_i}$ and $\tilde D_t$, and $\tilde D_s$ is the mixture of all source samples with size $Nm$.
    $d_{\cH \Delta \cH}(\tilde \cD_{s_i}, \tilde \cD_t)$ denotes the divergence between domain $s_i$ and $t$.
\end{theorem}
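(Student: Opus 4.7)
The plan is to mimic the classical Ben-David–Blitzer domain adaptation chain applied source by source, then fuse the per-source bounds with the simplex weights $\alpha$, and finally pay the statistical price once on the pooled sample of size $Nm$. Concretely, I would first invoke the single-source adaptation inequality for adversarial distributions: for each $i$,
\begin{gather*}
L(f, \tilde\cD_t) \le L(f, \tilde\cD_{s_i}) + \tfrac{1}{2} d_{\cH\Delta\cH}(\tilde\cD_{s_i}, \tilde\cD_t) + \tilde\xi_i,
\end{gather*}
where $\tilde\xi_i$ is the loss of the optimal joint hypothesis on $\tilde\cD_{s_i}$ and $\tilde\cD_t$. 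This follows from the triangle inequality for the loss together with the definition of the $\cH\Delta\cH$-divergence in \cref{def:h_delta_h_div}; it is the standard Ben-David inequality, so I would cite it rather than re-derive it.

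Next I would form the convex combination. Multiplying the $i$-th inequality by $\alpha_i \ge 0$ and summing, and using $\sum_i \alpha_i = 1$, the left-hand side is unchanged and we get
\begin{gather*}
L(f, \tilde\cD_t) \le \sum\nolimits_i \alpha_i L(f, \tilde\cD_{s_i}) + \sum\nolimits_i \alpha_i \left[\tfrac{1}{2} d_{\cH\Delta\cH}(\tilde\cD_{s_i}, \tilde\cD_t) + \tilde\xi_i\right].
\end{gather*}
By linearity of the loss in its distribution argument, $\sum_i \alpha_i L(f, \tilde\cD_{s_i}) = L(f, \tilde\cD_s^{\alpha})$ where $\tilde\cD_s^{\alpha} \triangleq \sum_i \alpha_i \tilde\cD_{s_i}$ is the $\alpha$-mixture of the source distributions. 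This is the key identity that lets a \emph{single} generalization argument be applied to the pooled source side.

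Then I would pass from population to empirical loss via a VC-based uniform convergence bound on the pooled $Nm$-point sample. Specifically, for a class of VC-dimension $d$ and a sample of size $Nm$, a standard symmetrization argument yields $\sup_{f \in \cH} |L(f, \tilde\cD_s^{\alpha}) - L(f, \tilde D_s)| \le C$ with probability at least $1 - p$, where $C = 4\sqrt{(2d \log(2Nm) + \log(4/p))/(Nm)}$. Plugging this in and identifying $L(f, \tilde D_s)$ with $L(\tilde f_t, \tilde D_s)$ (for $f = \tilde f_t = \sum_i \alpha_i \tilde f_{s_i}$, which is the specific hypothesis of interest), I recover exactly the stated bound \eqref{eq:fada}.

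The main obstacle will be bookkeeping around the weighted empirical mixture. In Peng \etal's setup $\tilde D_s$ is the concatenation of $m$ points from each source, so its empirical measure is uniform over sources rather than $\alpha$-weighted; matching this to $\tilde\cD_s^{\alpha}$ requires either (a) interpreting $L(\cdot, \tilde D_s)$ as a weighted empirical risk with weights $\alpha_i$ on the $i$-th block, or (b) invoking a weighted Rademacher/VC argument of the Mansour–Mohri–Rostamizadeh style. Either route introduces the $Nm$ in the denominator of $C$ and the $\log(2Nm)$ growth factor. A secondary subtlety is that \cref{ass:bouned_div_adv} is not actually used in this theorem (it is only needed downstream to bound $d_{\cH\Delta\cH}(\tilde\cD_{s_i}, \tilde\cD_t)$ by clean-distribution divergences plus $d_\epsilon$), so I would flag this as a clean separation between the generalization bound above and the later specialization to federated robustness propagation.
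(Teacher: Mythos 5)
You should know at the outset that the paper contains no proof of \cref{thm:fada} to compare against: the theorem is imported verbatim from Theorem 2 of \cite{peng2019federated}, and the only original proof content in the appendix is for \cref{thm:multi_src_domain_formal}, which treats \cref{thm:fada} as a black box and merely adds the triangle inequality $d_{\cH\Delta\cH}(\tilde D_{s_i}, \tilde D_t) \le 2d_\epsilon + d_{\cH\Delta\cH}(D_{s_i}, D_t)$ of \cref{eq:tri_d} under \cref{ass:bouned_div_adv}. What you have written is therefore a reconstruction of the upstream argument, and it is the correct one: the single-source Ben-David inequality, linearity of the expected loss in its distribution argument to fuse the $\alpha$-weighted sources, and a VC uniform-convergence step on the pooled $Nm$-point sample is exactly the chain by which \cite{peng2019federated} (following Ben-David et al.\ and Blitzer et al.) obtain this bound. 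Your two flagged subtleties are also the right ones. The weighted-mixture bookkeeping is a genuine wrinkle in the statement itself: with $m$ points per source, i.e., block proportions $\beta_i = 1/N$, the weighted uniform-convergence bound carries a factor $\sqrt{\sum_i \alpha_i^2/\beta_i} = \sqrt{N\sum_i \alpha_i^2}$, so the constant $C$ as displayed is exact only for uniform $\alpha$ and is slightly optimistic for general $\alpha$ --- a looseness inherited from the source, not introduced by you. And you are right that \cref{ass:bouned_div_adv} is not used in this theorem; it enters only downstream in the proof of \cref{thm:multi_src_domain_formal}.

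Two smaller points if you were to write this up fully. The statement quantifies over ``each $f\in\cH$'' on the left while pinning the empirical term on the right to the specific ensemble $\tilde f_t$; your specialization $f = \tilde f_t$ is the only reading under which the displayed inequality follows from your chain, so the quantifier as restated is sloppy (again inherited from \cite{peng2019federated}). Relatedly, $\tilde f_t = \sum_i \alpha_i \tilde f_{s_i}$ need not belong to $\cH$ at all, since convex combinations of hypotheses generally leave the class; the clean way to close this is to run the VC argument over $\cH$ and transfer to the weighted ensemble via the Blitzer--Mansour-style weighted bound you mention in option (b), rather than treating $\tilde f_t$ as an element of $\cH$.
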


Based on \cref{thm:fada}, we may choose a weighting strategy by $\alpha_i \propto 1 / d_{\cH \Delta \cH} (\tilde \cD_{s_i}, \tilde \cD_t)$.
However, the divergence cannot be estimated due to the lack of the target adversarial distribution $\tilde \cD_t$.
Instead, we provide a bound by clean-distribution divergence in \cref{thm:multi_src_domain_formal}.

\begin{lemma}[Formal statement of \cref{thm:multi_src_domain}] \label{thm:multi_src_domain_formal}
    Suppose \cref{ass:bouned_div_adv} holds. 
    Let $\cH$ be a hypothesis space of $VC$-dimension $d$ and $\{D_{s_i}\}_{i=1}^N$, $D_t$ be datasets induced by samples of size $m$ drawn from $\{\cD_{s_i}\}_{i=1}^N$ and $\cD_t$.
    Let an estimated target (robust) model be $\tilde f_t = \sum_i \alpha_i \tilde f_{s_i}$ where $\tilde f_{s_i}$ is the robust model trained on $D_{s_i}$.
    Let $\tilde D_{s}$ be the mixture of source samples from $\{\tilde D_{s_i}\}_{i=1}^N$.
    Then, $\forall \alpha \in \RR_+^N$, $\sum_{i=1}^N \alpha_i =1$, with probability at least $1-p$ over the choice of samples, for each $f\in \cH$, the following inequality holds:
    \begin{align*}
        L(f, \tilde \cD_t) &\le L(\tilde f_t, \tilde D_{s}) + d_\epsilon \\
        &\quad+ \sum\nolimits_{i=1}^N \alpha_i \left({1\over 2} d_{\cH \Delta \cH} (\cD_{s_i}, \cD_t) + \xi_i \right) + C,
    \end{align*}
     where $C$ and $\tilde \xi_i$ are defined in \cref{thm:fada}. %
     $D_{s}$ is the mixture of all source samples with size $Nm$.
     $d_{\cH \Delta \cH} (\cD_{s_i}, \cD_t)$ is the divergence over clean distributions.
\end{lemma}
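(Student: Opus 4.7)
\textbf{Proof proposal for \cref{thm:multi_src_domain_formal}.} The plan is to start from the adversarial-domain generalization bound in \cref{thm:fada} and then replace every \emph{adversarial} $\cH\Delta\cH$-divergence by its \emph{clean} counterpart, paying only a constant price dictated by \cref{ass:bouned_div_adv}. Concretely, \cref{thm:fada} already yields, with probability at least $1-p$,
\begin{align*}
    L(f,\tilde\cD_t)\ \le\ L(\tilde f_t,\tilde D_s)+\sum\nolimits_{i=1}^N \alpha_i\!\left(\tfrac12 d_{\cH\Delta\cH}(\tilde\cD_{s_i},\tilde\cD_t)+\tilde\xi_i\right)+C,
\end{align*}
so it suffices to upper-bound each $d_{\cH\Delta\cH}(\tilde\cD_{s_i},\tilde\cD_t)$ in terms of the clean divergence $d_{\cH\Delta\cH}(\cD_{s_i},\cD_t)$ and the adversarial perturbation budget $d_\epsilon$.

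The first step I would carry out is to establish (or invoke) the triangle inequality for $d_{\cH\Delta\cH}$. Because $d_{\cH\Delta\cH}$ is defined as a supremum over measurable events of a difference of probabilities (see \cref{def:h_delta_h_div}), it is a pseudo-metric on distributions, so for any three distributions $\cP,\cQ,\cR$ we have $d_{\cH\Delta\cH}(\cP,\cR)\le d_{\cH\Delta\cH}(\cP,\cQ)+d_{\cH\Delta\cH}(\cQ,\cR)$. Applying this twice, first inserting $\cD_{s_i}$ and then inserting $\cD_t$ between $\tilde\cD_{s_i}$ and $\tilde\cD_t$, gives
\begin{align*}
 d_{\cH\Delta\cH}(\tilde\cD_{s_i},\tilde\cD_t)\ \le\ d_{\cH\Delta\cH}(\tilde\cD_{s_i},\cD_{s_i})+d_{\cH\Delta\cH}(\cD_{s_i},\cD_t)+d_{\cH\Delta\cH}(\cD_t,\tilde\cD_t).
\end{align*}

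The second step is to apply \cref{ass:bouned_div_adv} to the first and third terms on the right-hand side, each of which is a clean-to-adversarial divergence on a single domain and therefore at most $d_\epsilon$. This produces $d_{\cH\Delta\cH}(\tilde\cD_{s_i},\tilde\cD_t)\le 2d_\epsilon+d_{\cH\Delta\cH}(\cD_{s_i},\cD_t)$. Substituting this bound into \cref{thm:fada}, pulling the $\tfrac12$ through, and using $\sum_i \alpha_i=1$ to collapse $\sum_i\alpha_i\cdot d_\epsilon$ into a single $d_\epsilon$ yields the advertised inequality (with $\xi_i$ identified with the adversarial-mixture optimum $\tilde\xi_i$ inherited from \cref{thm:fada}).

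The only real obstacle I anticipate is justifying the pseudo-metric property of $d_{\cH\Delta\cH}$ cleanly in one line; this is standard in the domain-adaptation literature (Ben-David \emph{et al.}) but needs a short remark here, since our event class $\cS_{\cH\Delta\cH}$ is closed under the operations required. Everything else is a direct plug-in: the convexity-in-$\alpha$ step is vacuous because the bound is already linear in $\alpha_i$, the probability statement is inherited verbatim from \cref{thm:fada}, and no new concentration argument is needed. The resulting bound makes the motivation for weighting $\alpha_i$ inversely to $d_{\cH\Delta\cH}(\cD_{s_i},\cD_t)$ transparent, since $d_\epsilon$ and $C$ are independent of $\alpha$ and the only $\alpha$-dependent term is the weighted clean divergence.
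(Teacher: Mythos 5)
Your proposal is correct and follows essentially the same route as the paper's proof: both start from \cref{thm:fada}, apply the triangle inequality for $d_{\cH\Delta\cH}$ with $\cD_{s_i}$ and $\cD_t$ inserted between $\tilde\cD_{s_i}$ and $\tilde\cD_t$, bound the two clean-to-adversarial terms by $d_\epsilon$ via \cref{ass:bouned_div_adv}, and substitute back, with $\sum_i\alpha_i=1$ collapsing the $\alpha$-weighted $d_\epsilon$ terms. Your explicit remarks on the pseudo-metric property of $d_{\cH\Delta\cH}$ and the bookkeeping of the $\tfrac12$ factor are details the paper leaves implicit, but the argument is identical.
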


\begin{proof}
Notice that \cref{eq:fada} is a loose bound as $d_{\cH \Delta \cH} (\tilde \cD_{s_i}, \tilde \cD_t)$ is neither bounded nor predictable.
Differently, $d_{\cH \Delta \cH} (\cD_{s_i}, \cD_t)$ can be estimated by clean samples which is available for all users.
Thus, we can bound $d_{\cH \Delta \cH} (\tilde D_{s_i}, \tilde D_t)$ with $d_{\cH \Delta \cH} (D_{s_i}, D_t)$.
By \cref{ass:bouned_div_adv}, it is easy to attain
\begin{align}
    d_{\cH \Delta \cH} (\tilde D_{s_i}, \tilde D_t) 
    &\le d_{\cH \Delta \cH} (\tilde D_{s_i}, D_{s_i}) 
        + d_{\cH \Delta \cH} ( D_{s_i}, D_t) \notag \\
        &\quad + d_{\cH \Delta \cH} (D_{t}, \tilde D_t) \notag \\
    &\le 2 d_{\epsilon} + d_{\cH \Delta \cH} ( D_{s_i}, D_t), \label{eq:tri_d}
\end{align}
where we used the triangle inequality in the space measured by $d_{\cH\Delta \cH}(\cdot, \cdot)$.
Substitute \cref{eq:tri_d} into \cref{eq:fada}, and we finish the proof.
\end{proof}

In \cref{thm:multi_src_domain_formal}, we discussed the bound for a $f\in \cH$ (which also generalize to $\tilde f_t$) estimated by the linear combination of $\{\tilde f_{s_i}\}_{i}$.
In our algorithm, $\tilde f_t$ and $\tilde f_{s_i}$ both represent the models with noise BN layers, and they only differ by the BN layers.
Therefore, \cref{thm:multi_src_domain_formal} guides us to re-weight BN parameters according to the domain differences.
Specifically, we should upweight BN statistics from user $s_i$ if $d_{\cH \Delta \cH} (\cD_{s_i}, \cD_t)$ is large, vice versa.
Since $d_{\cH \Delta \cH} (\cD_{s_i}, \cD_t)$ is hard to estimate, we may use the divergence over empirical distributions, i.e., $d_{\cH \Delta \cH} (D_{s_i}, D_t)$ instead.

\subsection{Limitation and social impacts}
\label{sec:dbn_copy}

Federated learning has emerged as a very effective framework to involve more users in training and tends to benefit all users meanwhile.
However, the device heterogeneity is not well considered in the goal of FL, especially facing the risk from adversarial attackers that can revert the model predictions in slight image obfuscation.
Our work fills the gap by developing a novel algorithm that shares robustness from resource-limited devices to those that are powerful enough to do adversarial training.
We believe that our work can ubiquitously benefit many low-energy devices and encourage fairness in machine learning.

Though our method can effectively and efficiently propagate robustness, a more complicated real-world environment could be considered.
For instance, the hardware capability may not be aware of the server resulting in the increased hardness of directional propagation.
We believe our work could be the starting point for resolving these complicated problems and we will be devoted to working them in the future.
\section{Additional Empirical Study Results}
\label{sec:app:exp}

We provide more details about our experiments in \cref{sec:app:exp_details} and additional evaluation results in the rest sections.
To ease the reading, we summarize the content as follows with the referred section numbers in brackets.
\preprintrm{(1)~\textbf{Qualitative studies}}Qualitative studies show that our method converges faster than the baselines (\ref{sec:conv_hyper}); $\lambda$ can trade off the robustness for standard accuracy like the AT coefficient; our BN-centered principle is well motivated by the significance of the concerned gaps in BN statistics and representations (\ref{sec:app:dist_bias_and_bn}, \ref{sec:app:rep}).\preprintrm{; the advantage of our method is fairly present on all domains rather than a biased improvement (\ref{sec:app:domain}); and fewer training samples or lower validation ratios will not hurt the advantage of our method (\ref{sec:data_size_impact}).
(2)~\textbf{Generalization to different settings}. We also conduct very detailed experiments to show the generalization of the superior performance of our method to more datasets (\ref{sec:office_results}), other models (\ref{sec:app:resnet}) and varying FL configurations in terms of contact ratios (the number of executed users per communication round), the scalability with more users, e.g., 200 users (\ref{sec:app:fl_conf}), variable data sizes per user (\ref{sec:app:unequal_size}).}
We also extend the experiments of \cref{fig:frp_varying} to DomainNet to show the generalization of the conclusions (\ref{sec:app:ext_fig}).

\subsection{Experiment details}
\label{sec:app:exp_details}

\textbf{Data}. By default, we use $30\%$ data of Digits for training.
Datasets for all domains are truncated to the same size following the minimal one.
In addition, we leave out $50\%$ of the training set for validation for Digits and $60\%$ for DomainNet.
Test sets are preset according to the benchmarks in~\cite{li2020fedbn}.
Models are selected according to the validation accuracy.
To be efficient, we validate robust users with RA while non-robust users with SA.
We use a large ratio of the training set for validation, because the very limited sample size for each user will result in biased validation accuracy.
When selecting a subset of domains for AT users, we select the first $n$ domains by the order: (MN, SV, US, SY, MM) for \textsc{Digits}, and (R, C, I, P, Q, S) for \textsc{DomainNet}.
Some samples are plotted in \cref{fig:dataset_samples} to show the visual difference between domains.

\begin{figure*}[h]
    \centering
    \includegraphics[width=\textwidth]{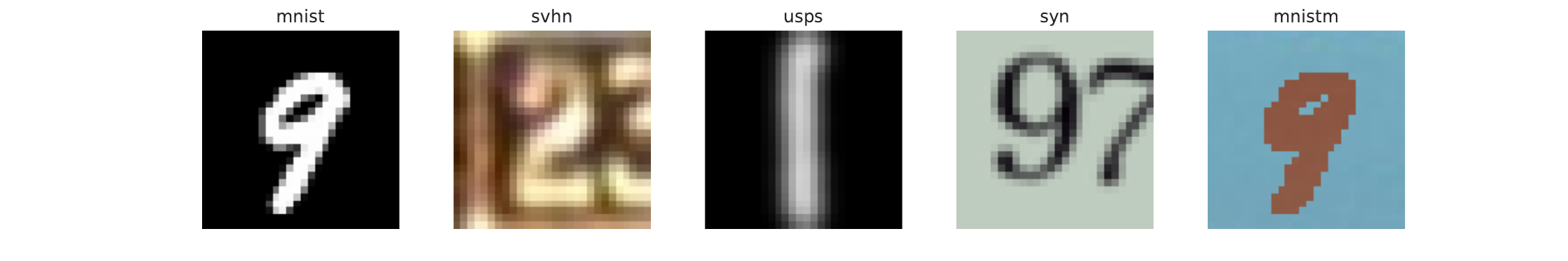}
    \includegraphics[width=\textwidth]{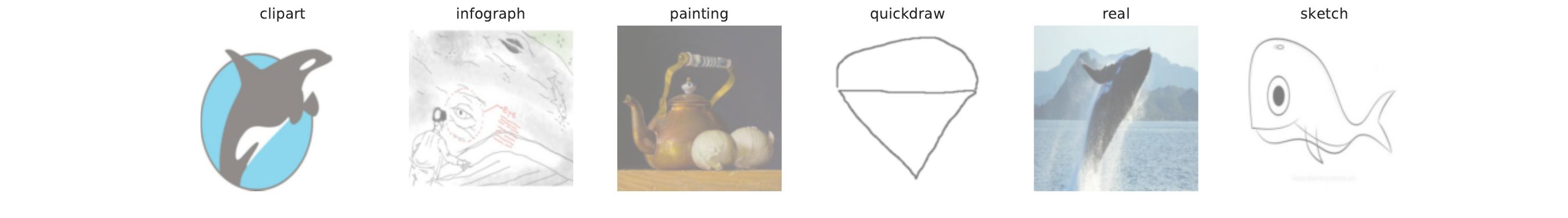}
    \caption{Visualization of samples.}
    \label{fig:dataset_samples}
\end{figure*}

\textbf{Hyper-parameters}.
The only hyper-parameter here is the $\lambda$ in PNC loss.
As PNC loss mimick the behavior of an adversarial loss $(\ell_{\text{CE}} + \ell_a)/2$, we follow the common practice to use $\lambda=0.5$ which could provide a generally fair balance between accuracy and robustness.

\begin{table}[ht]
    \centering
    \caption{Network architecture for Digits dataset.}
    \label{tbl:digits}
    \scriptsize
    \begin{tabular}{c|c}
    \toprule
        Layer & Details \\
    \midrule
        \multicolumn{2}{c}{\textbf{feature extractor}} \\
        \midrule
        conv1 & Conv2D(64, kernel size=5, stride=1, padding=2) \\
        bn1   & DBN2D, RELU, MaxPool2D(kernel size=2, stride=2) \\
        conv2 & Conv2D(64, kernel size=5, stride=1, padding=2) \\
        bn2   & DBN2D, ReLU, MaxPool2D(kernel size=2, stride=2) \\
        conv3 & Conv2D(128, kernel size=5, stride=1, padding=2) \\
        bn3   & DBN2D, ReLU \\
        \midrule
        \multicolumn{2}{c}{\textbf{classifier}} \\
        \midrule
        fc1   & FC(2048) \\
        bn4   & DBN2D, ReLU \\
        fc2   & FC(512) \\
        bn5   & DBN1D, ReLU \\
        fc3   & FC(10) \\
    \bottomrule
    \end{tabular}
\end{table}

\begin{table}[ht]
    \caption{Network architecture for DomainNet dataset.}
    \label{tbl:AlexNet}
    \centering
    \scriptsize
    \begin{tabular}{c|c}
    \toprule
        Layer & Details \\
    \midrule
        \multicolumn{2}{c}{\textbf{feature extractor}} \\
        \midrule
        conv1 & Conv2D(64, kernel size=11, stride=4, padding=2) \\
        bn1 & DBN2D, ReLU, MaxPool2d(kernel size=3, stride=2) \\
        conv2 & Conv2D(192, kernel size=5, stride=1, padding=2) \\
        bn2 & DBN2D, ReLU, MaxPool2d(kernel size=3, stride=2)  \\
        conv3 & Conv2D(384, kernel size=3, stride=1, padding=1)  \\
        bn3 & DBN2D, ReLU  \\
        conv4 & Conv2D(256, kernel size=3, stride=1, padding=1)  \\
        bn4 & DBN2D, ReLU  \\
        conv5 & Conv2D(256, kernel size=3, stride=1, padding=1)  \\
        bn5 & DBN2D, ReLU, MaxPool2d(kernel size=3, stride=2)  \\
        avgpool & AdaptiveAvgPool2d(6, 6)  \\
      \midrule
      \multicolumn{2}{c}{\textbf{classifier}} \\
      \midrule
        fc1 & FC(4096) \\
        bn6 & DBN1D, ReLU  \\
        fc2 & FC(4096)  \\
        bn7 & DBN1D, ReLU \\
        fc3 & FC(10)  \\
    \bottomrule
    \end{tabular}
\end{table}

\textbf{Network architectures} for \textsc{Digits} and \textsc{DomainNet} are listed in \cref{tbl:AlexNet,tbl:digits}.
For the convolutional layer (Conv2D or Conv1D), the first argument is the number channel. For a fully connected layer (FC), we list the number of hidden units as the first argument.

\textbf{Training}. 
Following \cite{li2020fedbn}, we conduct federated learning with $1$ local epoch and batch size $32$, which means users will train multiple iterations and communicate less frequently.
Without specification, we let all users participant in the federated training at each round.
Input images are resized to $256 \times 256$ for \textsc{DomainNet} and $28\times 28$ for \textsc{Digits}.
SGD (Stochastic Gradient Descent) is utilized to optimize models locally with a constant learning rate $10^{-2}$.
Models are trained for $300$ epochs by default.
For FedMeta, we use the $0.001$ learning rate for the meta-gradient descent and $0.02$ for normal gradient descent following the published codes from \cite{dinh2020personalized}.
We fine-tune the parameters for \textsc{DomainNet} such that the model can converge fast.
FedMeta converges slower than other methods, as it uses half of the batches to do the one-step meta-adaptation.
We do not let FedMeta fully converge since we have to limit the total FLOPs for a fair comparison.
FedRob fails to converge because locally estimated affine mapping is less stable with the large distribution discrepancy.

We implement our algorithm and baselines by PyTorch.
The FLOPs are computed by \texttt{thop} package in which the FLOPs of common network layers are predefined \footnote{Retrieve the \texttt{thop} python package from \url{https://github.com/Lyken17/pytorch-OpCounter}.}.
Then we compute the times of forwarding (inference) and backward (gradient computing) in training.
Accordingly, we compute the total FLOPs of the algorithm.
Because most other computation costs are relatively minor compared to the network forward/backward, these costs are ignored in our reported results.

\subsection{Convergence}
\label{sec:conv_hyper}

\begin{wrapfigure}{r}{0.4\textwidth}
    \centering
    \includegraphics[width=0.32\textwidth]{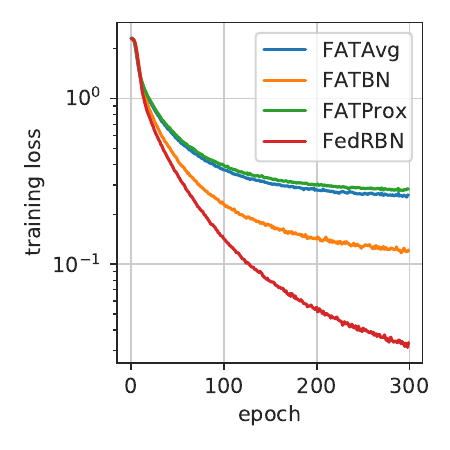}
    \vspace{-0.15in}
    \caption{The convergence curves.}
    \label{fig:param_sens}
    \vspace{-0.2in}
\end{wrapfigure}
The plot in \cref{fig:param_sens} shows convergence curves of different competing algorithms.
Since FedRBN is similar as FATBN in handling client heterogeneity, FATBN and FedRBN have similar convergence rates that are faster than others.
We see that FedRBN converges even faster than FATBN. A possible reason is that DBN decouples the normal and adversarial samples,
the representations after BN layers will be more consistently distributed among non-iid users.

\subsection{Effect of the PNC parameter $\lambda$}
\label{sec:app:trade_off_pnc}

In \cref{fig:trade_off}, we provide a detailed study on the parameter $\lambda$ based on the Digits configuration in \cref{tbl:bmk_single_source_prop}.
First, the effect of $\lambda$ is consistent on RA for all partial AT settings, where a larger $\lambda$ leads to better RA. 
Second, few PNC loss can enhance SA.
To understand this, we present \cref{fig:pen_fea_viz_BNa_FRP-full-OtoM_ex} where the unadapted $\text{BN}_a$ provides less discrimination on the clean examples compared to the adapted one (\cref{fig:pen_fea_viz_FRP_BNc-BNa_ex}), due to the domain bias.
Thus, by fixing such a bias, the proposed PNC loss with non-zero $\lambda$ improves the accuracy on clean examples. 
Last, because of the conflicting nature of RA and SA~\cite{tsipras2019robustness}, upweighting the surrogate adversarial loss (PNC) leads to better RA but worse SA.

\begin{figure}[ht]
    \centering
    \begin{subfigure}{0.48\textwidth}
        \centering
        \includegraphics[width=0.8\textwidth]{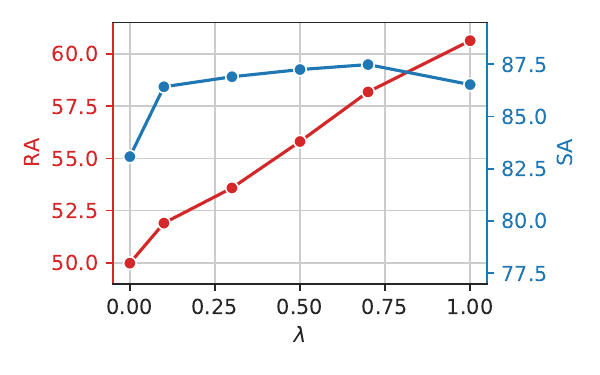}
        \vspace{-0.15in}
        \caption{20\% AT users}
    \end{subfigure}
    \hfil
    \begin{subfigure}{0.48\textwidth}
        \centering
        \includegraphics[width=0.8\textwidth]{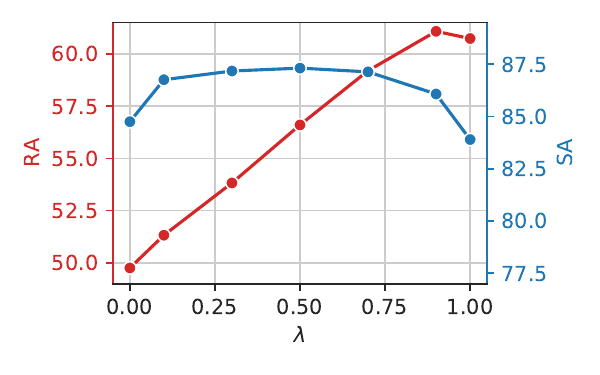}
        \vspace{-0.15in}
        \caption{MNIST-domain AT users}
    \end{subfigure}
    \caption{Evaluate the effect of PNC coefficient $\lambda$ on the Digits dataset. The single domain in (b) is MNIST.}
    \label{fig:trade_off}
\end{figure}

\subsection{BN statistic heterogeneity}
\label{sec:app:dist_bias_and_bn}

\begin{wrapfigure}{r}{0.5\textwidth}
    \centering
    \includegraphics[width=0.22\textwidth]{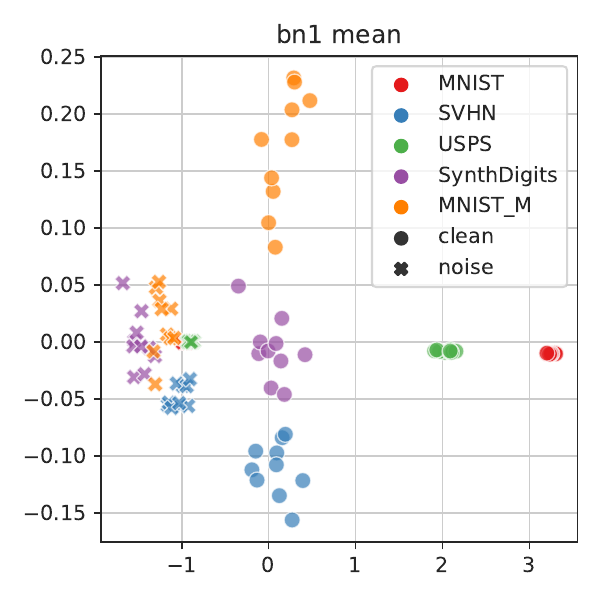} \hfil
    \includegraphics[width=0.22\textwidth]{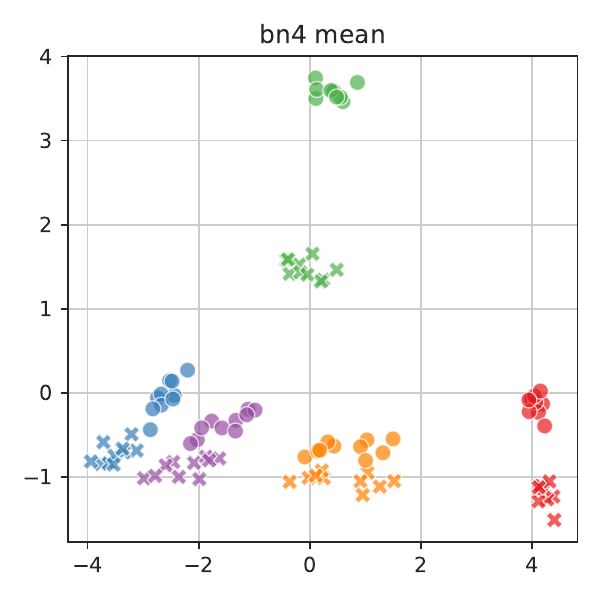} %
    \vskip -0.1in
    \caption{Visualization of users' BN statistics by PCA, colored by users' domains and marked by $\text{BN}_a$ (cross) and $\text{BN}_c$ (circle). The BNs parameters (mean and variance) after the first linear layer are extracted from a Digits model trained by LBN+DBN on a 100\% AT FRP setting. Figures for more datasets and layers in \cref{fig:bn_pca_ex}.}
    \label{fig:bn_pca}
    \vskip -0.4in
\end{wrapfigure}

We present the BN statistics in \cref{fig:bn_pca} and more figures in \cref{fig:bn_pca_ex}.
In \cref{fig:bn_pca}, both the domain biases and the adversarial ones are not neglectable, motivating us to combine the two techniques in FRP.
In addition, we notice that the latter biases are more significant in the shallow layer (\texttt{bn1}) where adversarial BN statistics from all domains are gathered together away from their clean statistics.

\subsection{More figures of representations}
\label{sec:app:rep}

We provide more representation visualization in \cref{fig:pen_fea_viz_ex}.

\subsection{More federated configurations}
\label{sec:app:fl_conf}

\begin{wraptable}{R}{0.4\textwidth}
    \vskip -0.2in
    \centering
    \caption{Evaluation with different FL configurations}
    \label{tab:eval_fl_conf}
    \centering
    \scriptsize
    \begin{tabular}{lll|cc}
    \toprule
    $B$ & $E$ & method &         RA &         SA \\
    \midrule
    10  & 1 & FATBN & 50.9 & \textbf{83.9} \\
        & 1 & FedRBN & \textbf{60.0} & 82.8 \\
    \midrule
    10  & 4 & FATBN & 42.0 & 75.8 \\
        & 4 & FedRBN & \textbf{56.3} & \textbf{76.1} \\
    \midrule
    10  & 8 & FATBN & 30.9 & 63.1 \\
        & 8 & FedRBN & \textbf{53.4} & \textbf{68.4} \\
    \midrule
    50  & 1 & FATBN & 37.0 & \textbf{85.8} \\
        & 1 & FedRBN & \textbf{53.2} & 84.5 \\
    \midrule
    100 & 1 & FATBN & 35.7 & \textbf{85.3} \\
        & 1 & FedRBN & \textbf{53.0} & 83.8 \\
    \bottomrule
    \end{tabular}
\end{wraptable}

We also evaluate our method against FedBN with different federated configurations of local epochs $E$ and batch size $B$.
We constrain the parameters by $E\in \{1, 4, 8\}$ and $B\in \{10, 50, 100\}$.
The $20\%$ $3/5$ domain FRP setting is adopted with \textsc{Digits} dataset.
In \cref{tab:eval_fl_conf},  %
the competition results are consistent that our method significantly promotes robustness over FedBN.
We also observe that both our method and FedBN prefer a smaller batch size and fewer local epochs for better RA and SA.
In addition, our method drops less RA when $E$ is large or batch size increases.

\textbf{Partial participants}.
In reality, we cannot expect that all users are available for training in each round.
Therefore, it is important to evaluate the federated performance when only a few users can contribute to the learning.
To simulate the scenario, we uniformly sample a number of users without replacement per communication round.
Only these users will train and upload models.
In \cref{fig:train_pr_nuser}, RA and SA are reported against the number of selected users.
We observe that SA is barely affected by the partial involvement, while RA increases by fewer users per round.
Since the actual update steps in the view of the global server are reduced with lower contact ratios, the result is consistent with \cref{tab:eval_fl_conf}, where smaller batch sizes or fewer local steps lead to better robustness.

\begin{figure*}[ht]
    \vspace{-0.1in}
    \centering
    \includegraphics[width=0.27\textwidth]{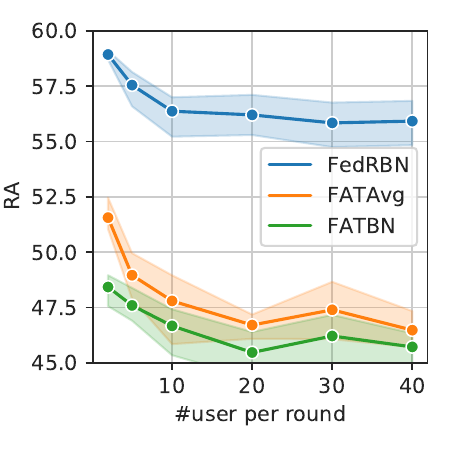} \hfil
    \includegraphics[width=0.27\textwidth]{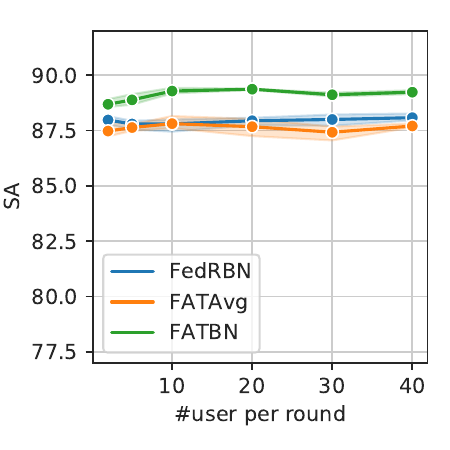} \hfil
    \includegraphics[width=0.34\textwidth]{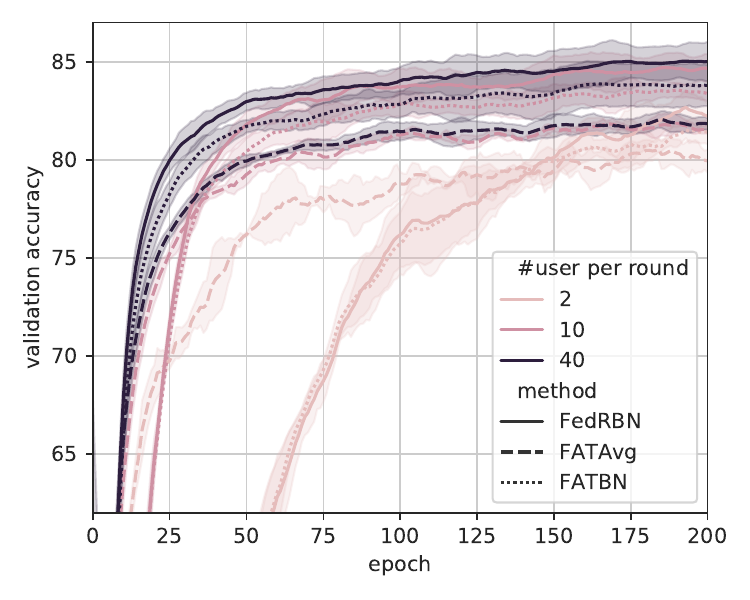}
    \caption{Vary the number of involved users per communication round. The validation accuracy is computed by averaging users' accuracy. For AT users, the RA is used while SA is used for ST users.}
    \label{fig:train_pr_nuser}
\end{figure*}

\textbf{Scalability with more users}.
Since our method has the similar training/communication strategy as FATBN or FATAvg (except switching and copying BN which are quite lightweight), the federation of FedRBN and its complexity scale up to more users like FATBN or FATAvg who are widely used scalable implementations.
To empirically evaluate the scalability of our method versus FATBN and FATAvg, we experiment with more clients given the Digits dataset.
With the same total training samples, we re-distribute the data to different numbers of clients in a non-uniform manner.
In \cref{fig:train_pd_nuser_trade_off}, we evaluate the RA and SA by increasing the total number of users, including 25, 50, 150, 200.
In each communication round, 50\% randomly selected users will upload their trained models.
The trend shows that both RA and SA will be lower when samples are distributed to more clients.
Despite the degradation, our method maintains advantages in RA consistently.

\begin{figure}[ht]
    \vspace{-0.15in}
    \centering
    \includegraphics[width=3.6cm]{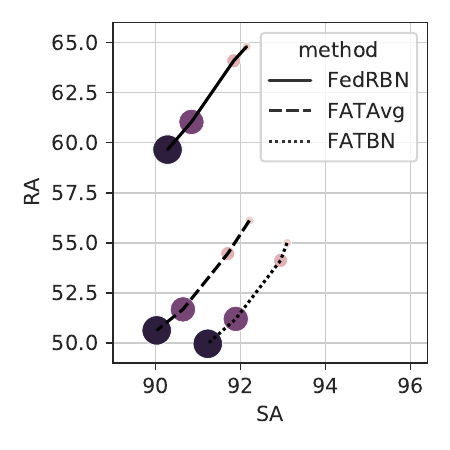}
    \hfil
    \includegraphics[width=4.8cm]{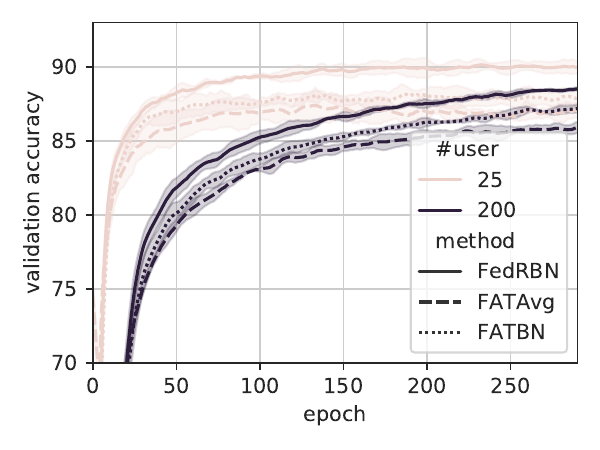}
    \caption{Robustness and accuracy by the increasing total number of users as 25, 50, 150, and 200. The larger scatter in the left figure indicates more users.}
    \label{fig:train_pd_nuser_trade_off}
\end{figure}
    
In \cref{fig:train_pd_nuser_trade_off}, we also demonstrate that our method converges faster than baselines either with fewer or more users.
The validation accuracy is computed by averaging users' accuracy when RA is used for AT users and SA for ST users.
As observed, when data are more concentrated in a few users (i.e. smaller numbers of users), the convergence will be faster.
The result is natural for most non-iid federated learning problems.
For example, \cite{li2020convergence} proved that more clients will result in worse final losses and slower convergence.

\preprintrm{

\subsection{Evaluation with unequal dataset sizes.} 
\label{sec:app:unequal_size}

\begin{wrapfigure}{r}{5cm}
    \vspace{-0.5in}
    \centering
    \includegraphics[width=5cm]{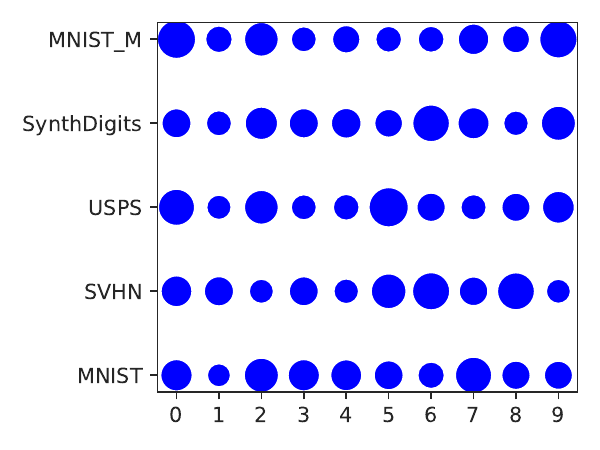}
    \caption{Dataset sizes for users when the global seed is set as $1$. Larger circles indicate more training samples. The x-axis represents the user index.}
    \label{fig:dir_data_partition}
    \vspace{-0.1in}
\end{wrapfigure}

Sub-sampling the same number of data points for each user may not be realistic in practice.
Therefore, we study the experiment setting that sample sizes for users are different. 
We assume a user samples a variable ratio of data, which follows a Dirichlet distribution.
We plot the different sample sizes for users in \cref{fig:dir_data_partition}.
Due to the varying dataset sizes, we let each user run a fixed number of iterations which is calculated by the average number of the per-epoch iterations of all users.
In \cref{tbl:dir_comp}, we summarize the $3$-repetition-averaged comparison results on the $20\%$ $3/5$ domain FRP setting on the \textsc{Digits} dataset.
We see that our method is still most competitive with non-uniform dataset sizes.

    \begin{table}[ht]
        \centering

        \captionof{table}{Comparison with unequal user-dataset sizes.}
        \label{tbl:dir_comp}
        \small
        \begin{tabular}{l|cc}
        \toprule
        &   RA &   SA \\
        \midrule
        FedRBN (ours)  & \textbf{53.1} & 84.4 \\
        FedBN   & 37.3 & \textbf{85.7} \\
        FedAvg  & 39.6 & 83.4 \\
        FedProx & 39.5 & 83.4 \\
        \bottomrule
        \end{tabular}
    \end{table}

}

\preprintrm{
\subsection{Domain-wise evaluation}
\label{sec:app:domain}

\begin{figure*}[ht]
  \centering
  \begin{subfigure}{0.49\textwidth}
      \includegraphics[width=0.49\textwidth]{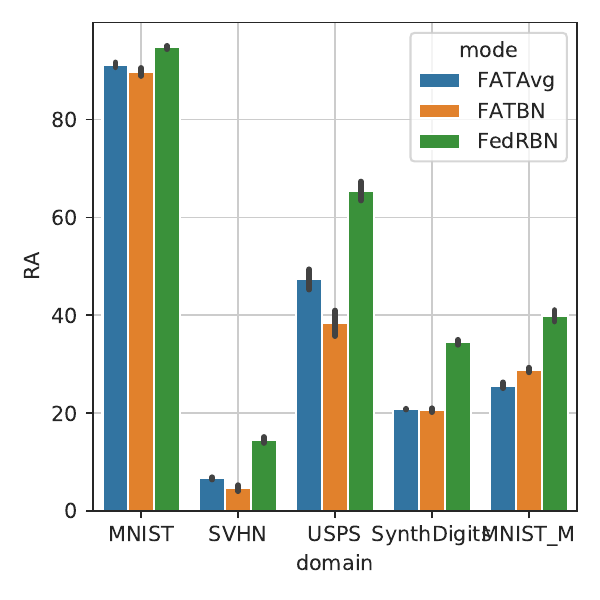}
      \hfil
      \includegraphics[width=0.49\textwidth]{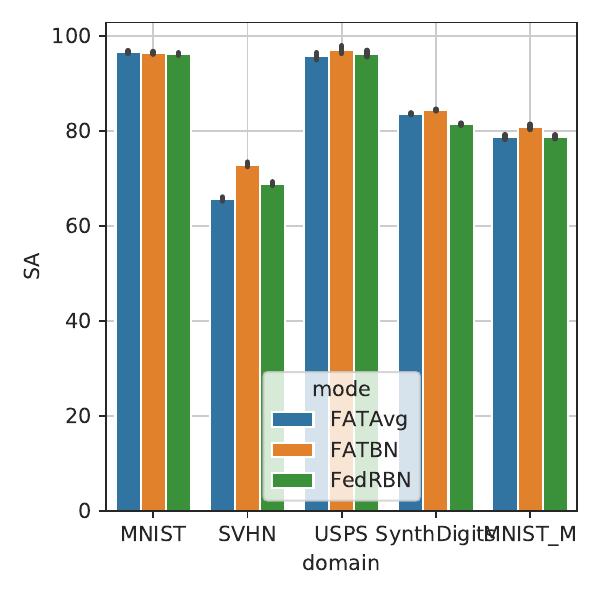}
      \caption{Users from the MNIST domain are AT users}
      \label{fig:domain_compare:O2M}
  \end{subfigure}
  \hfil
  \begin{subfigure}{0.49\textwidth}
      \includegraphics[width=0.49\textwidth]{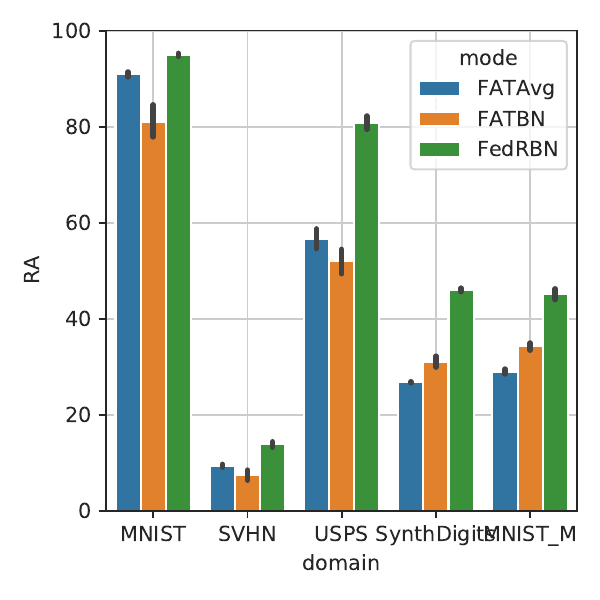}
      \hfil
      \includegraphics[width=0.49\textwidth]{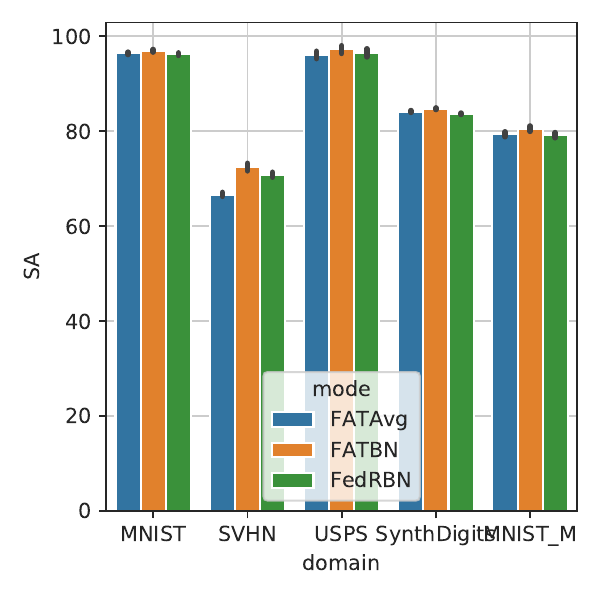}
      \caption{20\% users are AT users}
      \label{fig:domain_compare:noise_ratio}
  \end{subfigure}
  \caption{Comparison of robustness transfer approaches by domains.}
  \label{fig:comp_domain}
\end{figure*}

We note that performance in different domains could vary a lot.
Simply comparing the performance averaged by users may not clearly present a fair comparison.
As a consequence, we conduct experiments to compare different methods domain by domain.
The basic settings follow \cref{sec:comp_baselines}.
In \cref{fig:domain_compare:O2M}, one out of five domains is noised.
Domains including SVHN, USPS, SynthDigits, MNIST-M are not augmented with adversarial samples.
Therefore robustness is gained through federated propagation.
Both in the easiest (USPS) and hardest (SVHN) cases, FedRBN outperforms baselines with higher RA and similar SA.
In \cref{fig:domain_compare:noise_ratio}, $20\%$ users are noised in each domain.
FedRBN improves the in-domain robustness propagation against FATBN by up to 20\% (USPS).
In summary, the propagation efficiency of FedRBN is consistent across different domains.

}

\preprintrm{

\subsection{Impact of data size and validation ratio}
\label{sec:data_size_impact}

To investigate the impact of data size, we conduct experiments with varying training dataset sizes and validation ratios.
Experiments follow previous protocols on the Digits dataset.
Following the training/testing split in \cite{li2020fedbn}, we first sample a percentage of data for training.
From the training set, we randomly select a subset for validation.
We denote the two subsampling ratios as \texttt{training percentage} and \texttt{validation ratio}, respectively.
When varying the training percentage, we fix the validation ratio at $10\%$.
When varying the validation ratio, we use $30\%$ training data.
As shown in \cref{fig:data_size:tr_pct}, more training samples can improve the robustness and our method outperforms baselines consistently.
In  \cref{fig:data_size:val_pct}, the ratio of validation set is less influential for the robustness performance of our FedRBN, but a larger validation ratio can reduce the time complexity of training as less samples are used for gradient computation.
Though baseline methods obtain higher robust accuracies with smaller validation ratios, our method still introduces large gains in all cases.

\begin{figure}[h]
    \centering
    \begin{subfigure}{0.49\textwidth}
        \includegraphics[width=0.49\textwidth]{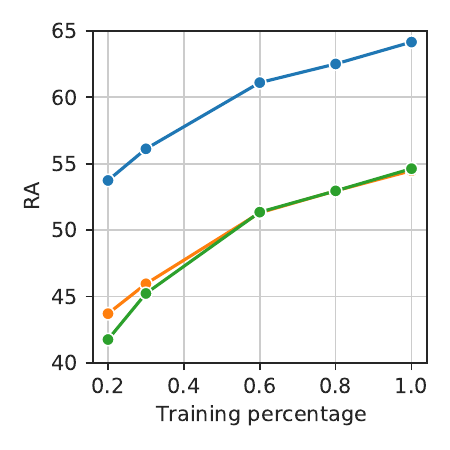}
        \hfil
        \includegraphics[width=0.49\textwidth]{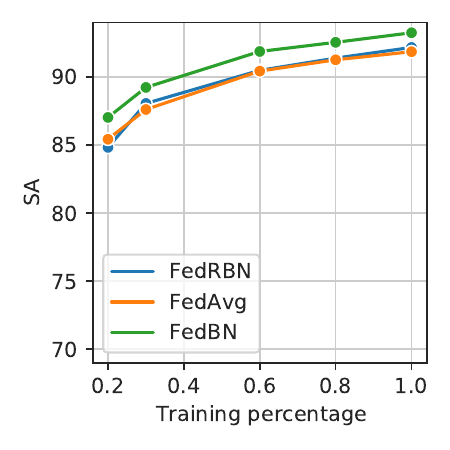}
        \caption{Varying the size of training set.}
        \label{fig:data_size:tr_pct}
    \end{subfigure}
    \hfil
    \begin{subfigure}{0.49\textwidth}
        \includegraphics[width=0.49\textwidth]{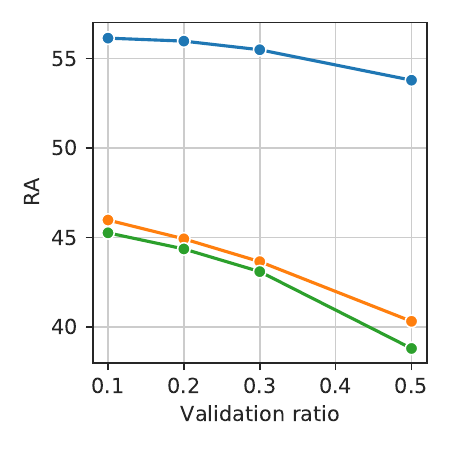}
        \hfil
        \includegraphics[width=0.49\textwidth]{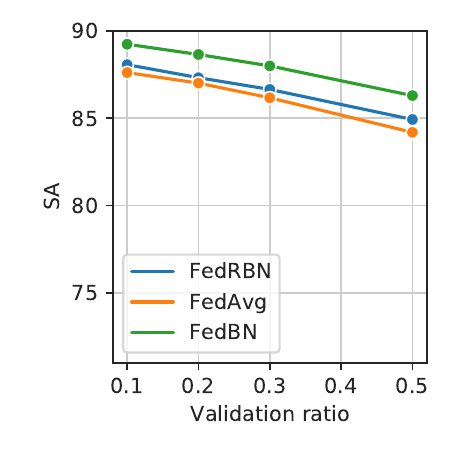}
        \caption{Varying the ratio of validation data in training set.}
        \label{fig:data_size:val_pct}
    \end{subfigure}
    \caption{Experiments with varying data size.}
\end{figure}

}

\preprintrm{

\subsection{Results on the Office-Caltech10 Dataset}
\label{sec:office_results}

\begin{table}[ht]
    \caption{Comparison to baselines on the Office-Caltech10 dataset. Standard deviations are reported in brackets.}
    \label{tbl:result_office}
    \centering
    \scriptsize
    \begin{tabular}{l|*{4}{c}}
    \toprule
    AT users & \multicolumn{2}{c}{Amazon} & \multicolumn{2}{c}{All} \\
    metric &           RA &         SA &            RA &            SA \\
    \midrule
    FedRBN (ours)  & \textbf{9.2 (3.4)} &  62.9 (3.4) &          29.1 (2.4) & \textbf{68.7 (1.7)} \\
    FedBN   &          5.1 (1.1) &  \textbf{65.9 (2.4)} & \textbf{30.8 (2.5)} &          67.2 (2.1) \\
    FedAvg  &          0.6 (0.5) &  54.7 (3.8) &          13.3 (2.3) &          56.0 (2.6) \\
    FedProx &          0.6 (0.6) &  55.3 (4.7) &          13.6 (1.8) &          56.2 (2.1) \\
    \bottomrule
    \end{tabular}
\end{table}

Following the same setting as \textsc{DomainNet} experiments, we extend our experiments to a smaller dataset, Office-Caltech10 dataset preprocessed by \cite{li2020fedbn} with images acquired by different cameras.  %
The dataset includes 4 domains: Amazon, Caltech, DSLR, Webcam.
Because the dataset has very few samples, we only generate 2 users per domain such that each user has at least 100 samples.
In \cref{tbl:result_office}, we see that our method outperforms baselines as only one domain is adversarially trained.
As the training set is rather small, the RAs are generally worse than the ones on \textsc{Digits} or \textsc{DomainNet}.

}

\begin{table}[ht]
    \caption{Comparison to robustness transferring by fine-tuning (FT).}
    \label{tab:ft}
    \centering
    \scriptsize
    \begin{tabular}{l|*{4}{c}}
    \toprule
           &  \# FT iterations & \# freeze layers &   RA &   SA \\
    \midrule
    FedRBN &       - & 0 & \textbf{53.1} & 84.4 \\
    FedAvg &       - & 0 & 44.7 & \textbf{85.7} \\
    \midrule
    FedAvg+FT &       200 & 0 & 39.2  & 83.6 \\
    FedAvg+FT &       200 & 3 & 31.6 & 78.2 \\
    FedAvg+FT &       200 & 4 & 29.8 & 74.7 \\
    FedAvg+FT &       200 & 5 & 31.5 & 66.1 \\
    \midrule
    FedAvg+FT &       100 & 0 & 40.6 & 83.4 \\
    FedAvg+FT &       100 & 3 & 32.0 & 77.5 \\
    FedAvg+FT &       100 & 4 & 31.5 & 72.9 \\
    FedAvg+FT &       100 & 5 & 31.5 & 64.5 \\
    \midrule
    FedAvg+FT &        20 & 0 & 40.6 & 79.6 \\
    FedAvg+FT &        20 & 3 & 33.4 & 73.8 \\
    FedAvg+FT &        20 & 4 & 31.9 & 66.8 \\
    FedAvg+FT &        20 & 5 & 31.9 & 62.2 \\
    \bottomrule
    \end{tabular}
\end{table}

\preprintrm{

\subsection{Comparison to robustness transferring by fine-tuning}

As an alternative to FRP, fine-tuning (FT) the federated-trained models on target users can enjoy even better efficiency than FedRBN.
Here, we first train AT users by FedAvg for 300 epochs.
Note that we do not adopt FedBN because FedBN will not output a single model for adapting to new users.
Then, the model is used for initializing the models for ST users.
These ST users will be trained by FedAvg for a given number of FT iterations.
Still, we adopt the $20\%$ $3/5$ domain FRP setting on the \textsc{Digits} dataset.
In \cref{tab:ft}, we see that such a fine-tuning does not improve the robustness (RA).

}

\subsection{Experiments in \cref{fig:highlight}}
\label{sec:app:exp_highlight}

Though the results in \cref{fig:highlight} have been reported in previous experiments.\preprintrm{, we re-summarize the results in \cref{tbl:highlight} for ease of reading.}
The basic setting follows the previous experiments on the Digits dataset.
We construct different portions of AT users by \emph{in-domain} or \emph{out-domain} propagation settings.
When robustness is propagated in domains, we sample AT users in each domain by the same portion and leave the rest as ST users.
When robustness is propagated out of domains, all users from the last two domains will not be adversarially trained and gain robustness from other domains.
Concretely, we add the FedRBN without copy propagation (\texttt{FedRBN w/o prop}) in the table, to show the propagation effect.
\texttt{FedRBN w/o prop} outperforms the baselines only when the AT-user portion is more than 60\%.
Meanwhile, due to the lack of copy propagation, the RA is much worse than the propagated \texttt{FedRBN}.
Unless no AT user presents in the federated learning, FedRBN always outperforms baselines.

\preprintrm{

\begin{table*}[h]
    \caption{Results and detailed configurations of \cref{fig:highlight} on the 5-domain Digits dataset. FedAvg and FedBN corresponds to FATAvg and FATBN in the figure.}
    \label{tbl:highlight}
    \scriptsize
    \centering
    \begin{tabular}{*{3}{l}|*{4}{c}}
        \toprule
        AT-user ratio & propagation & method &  RA &         SA &  \# AT domain &  per-domain AT ratio    \\
        \midrule
        0.00 & none & FATAvg & 35.3 & 82.0 &         0 &          0.0 \\
             &      & FATBN & 32.1 & 84.3 &         0 &          0.0 \\
             &      & FedRBN (ours) & 32.1 & 84.3 &         0 &          0.0 \\
        0.12 & out-domain & FATAvg & 44.1 & 84.1 &         3 &          0.2 \\
             &      & FATBN & 42.4 & 86.0 &         3 &          0.2 \\
             &      & FedRBN (ours) & 55.1 & 84.6 &         3 &          0.2 \\
        0.20 & in-domain & FATAvg & 45.9 & 84.7 &         5 &          0.2 \\
             &      & FATBN & 44.8 & 86.0 &         5 &          0.2 \\
             &      & FedRBN (ours) & 57.3 & 85.3 &         5 &          0.2 \\
        0.60 & out-domain & FATAvg & 52.0 & 84.2 &         3 &          1.0 \\
             &      & FATBN & 53.0 & 85.5 &         3 &          1.0 \\
             &      & FedRBN (ours) & 61.6 & 85.0 &         3 &          1.0 \\
        1.00 & none & FATAvg & 57.5 & 84.7 &         5 &          1.0 \\
             &      & FATBN & 59.1 & 85.9 &         5 &          1.0 \\
             &      & FedRBN (ours) & 65.7 & 85.9 &         5 &          1.0 \\
        \bottomrule
    \end{tabular}
\end{table*}

}

\preprintrm{

\subsection{Experiments with ResNet}
\label{sec:app:resnet}

\begin{table}[ht]
    \caption{Robustness propagation using ResNet18.}
    \label{tbl:bmk_resnet}
    \scriptsize
    \centering
    \begin{tabular}{@{ }l|*{6}{c}@{ }}
      \toprule
    AT users  & \multicolumn{2}{c}{All} & \multicolumn{2}{c}{20\%} & \multicolumn{2}{c}{Real} \\
      \cmidrule(r){2-3} \cmidrule(r){4-5} \cmidrule(r){6-7} 
    Metrics  & RA & SA & RA & SA & RA & SA \\
    \midrule
    FedRBN (ours) &  \textbf{59.1} & \textbf{61.6} & \textbf{49.6} & \textbf{63.1} & \textbf{42.9} & 56.4   \\
    FATBN         &  37.3 & 60.4 & 25.5 & 62.8 & 13.6 & \textbf{57.4}    \\
    FATAvg        &  46.6 & 57.1 & 36.5 & 61.6 & 18.8 & 49.0    \\
    \bottomrule
    \end{tabular}
\end{table}

Like \cref{tbl:bmk_single_source_prop}, we conduct the same DomainNet experiments but using ResNet18 \citep{he2016deep} in place of AlexNet.
Most configurations are the same but we use a cosine-annealing schedule of the learning rate from 0.05 to 0 within 600 epochs.
Compared to AlexNet, ResNet18 is significantly more robust in all three tasks.
Consistent with the AlexNet-based results, our method outperforms the two best baselines.

}

\subsection{Extending experiments of \cref{fig:frp_varying}}
\label{sec:app:ext_fig}

In \cref{fig:vary_frp}, we evaluate methods in varying FRP settings and FedRBN beats the strongest baselines consistently

\begin{figure}[ht]
    \centering
    \begin{subfigure}{0.37\textwidth}
        \centering
        \includegraphics[width=0.8\textwidth]{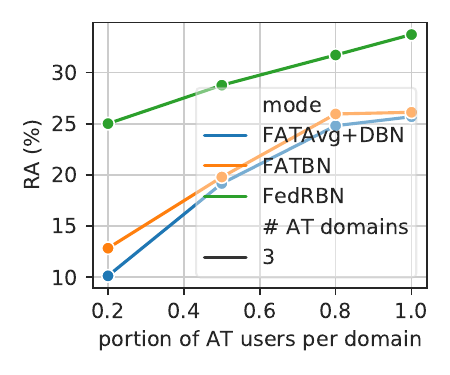}
        \vspace{-0.15in}
        \caption{Vary ratio of AT users}
    \end{subfigure}
    \hfil
    \begin{subfigure}{0.37\textwidth}
        \centering
        \includegraphics[width=0.8\textwidth]{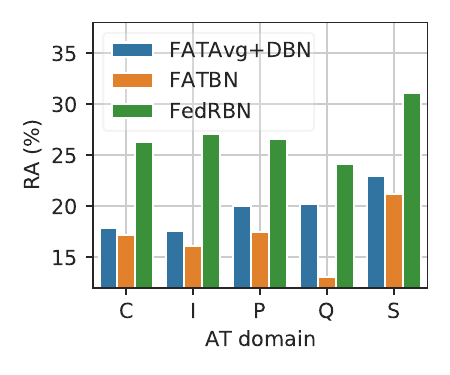}
        \vspace{-0.15in}
        \caption{Vary domain of AT users}
    \end{subfigure}
    \caption{Evaluating FRP performance with varying FRP settings on DomainNet. The x-axis of (b) represents the first letter of each domain.}
    \label{fig:vary_frp}
\end{figure}

\begin{figure}[ht]
    \centering
    \includegraphics[width=0.45\textwidth]{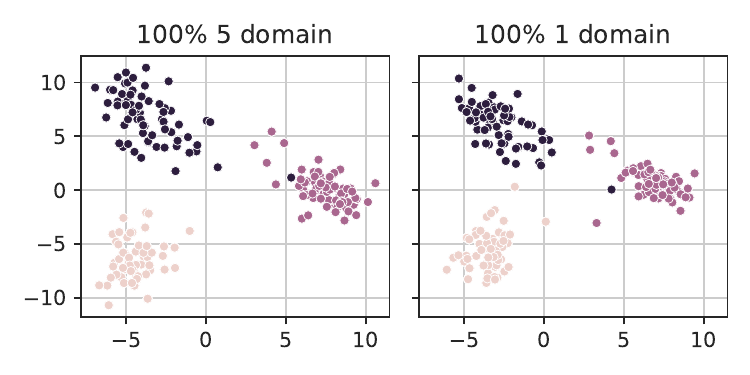}
    \includegraphics[width=0.45\textwidth]{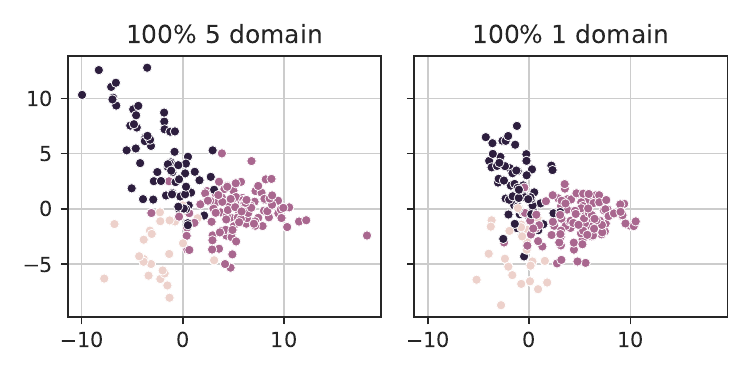}
    \includegraphics[width=0.45\textwidth]{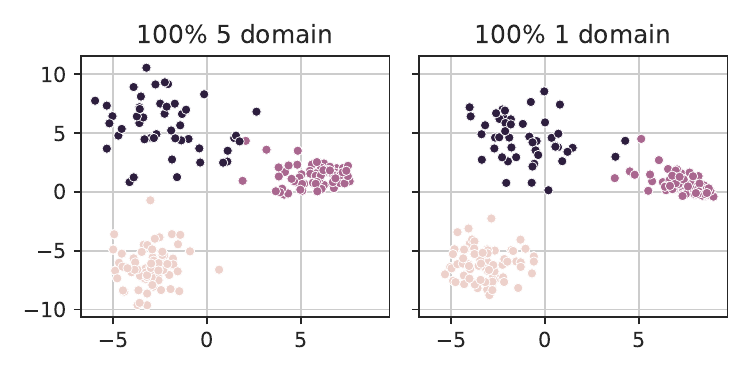}
    \includegraphics[width=0.45\textwidth]{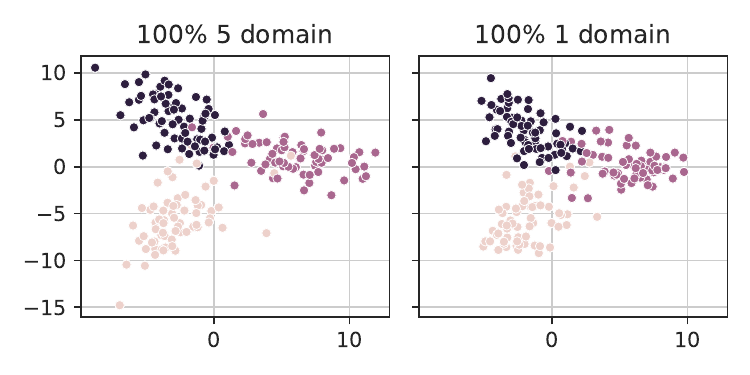}
    \includegraphics[width=0.45\textwidth]{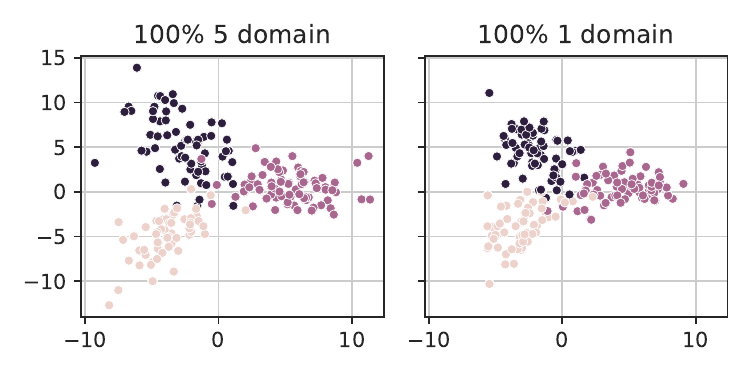}
    \caption{Penultimate layer representation visualized by projecting 300 randomly-selected samples into the first three classes in standard Digits model following \cite{muller2019when}. From the top to the bottom, these domains are visualized: MNIST, SVHN, SynthDigits, and MNIST-M.}
    \label{fig:pen_fea_viz_ex_0}
\end{figure}

\begin{figure*}[t]
    \centering
    \begin{subfigure}{0.49\textwidth}
        \centering
        \includegraphics[width=0.95\textwidth]{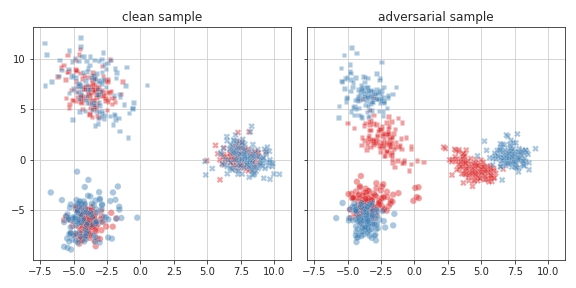}
        \includegraphics[width=0.95\textwidth]{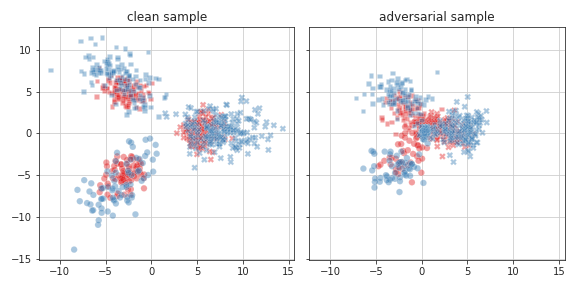}
        \includegraphics[width=0.95\textwidth]{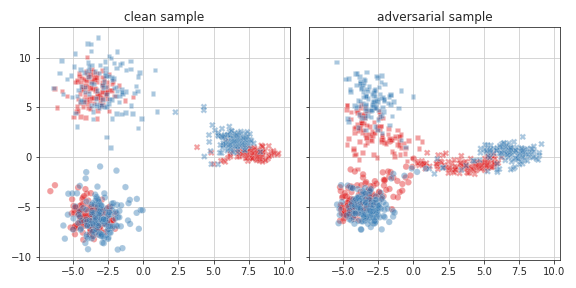}
        \includegraphics[width=0.95\textwidth]{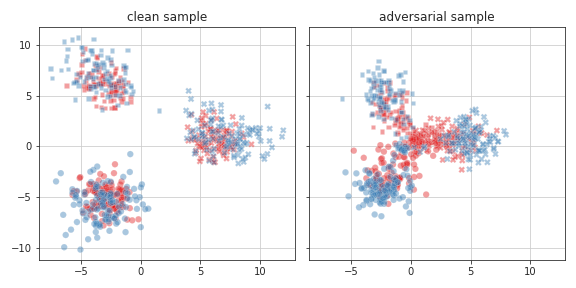}
        \includegraphics[width=0.95\textwidth]{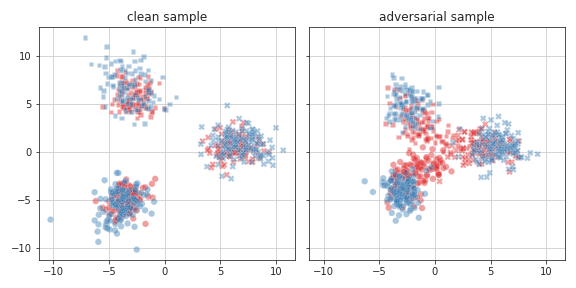}
        \caption{Representations are computed with \textcolor{red}{$BN_c$} or \textcolor{blue}{$BN_a$}.}
        \label{fig:pen_fea_viz_FRP_BNc-BNa_ex}
    \end{subfigure}
    \begin{subfigure}{0.49\textwidth}
        \centering
        \includegraphics[width=0.95\textwidth]{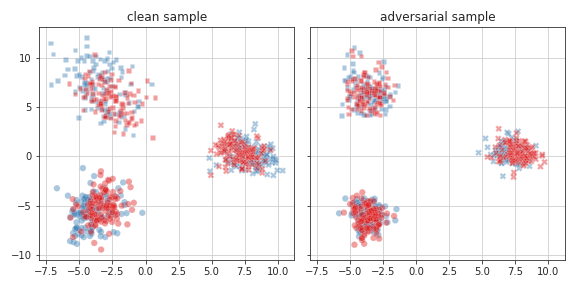}
        \includegraphics[width=0.95\textwidth]{fig/FedRBNn/pen_fea_viz_Digits_SVHN_BNa_FRP-full-OtoM.png}
        \includegraphics[width=0.95\textwidth]{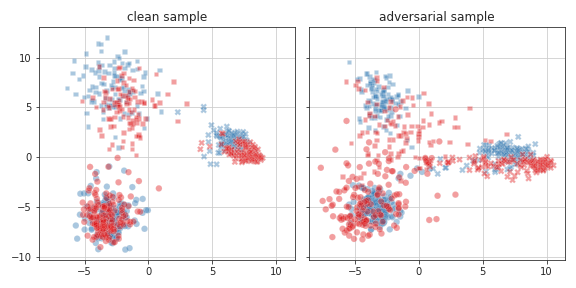}
        \includegraphics[width=0.95\textwidth]{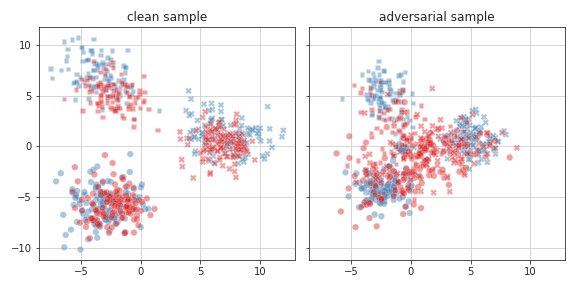}
        \includegraphics[width=0.95\textwidth]{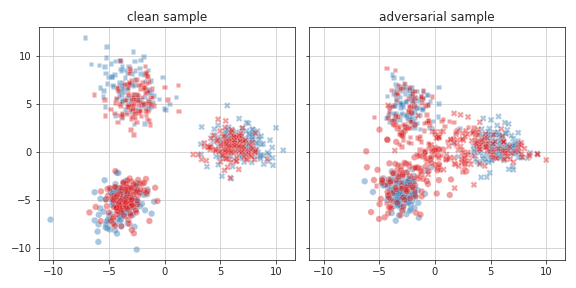}
        \caption{Representations are computed by \textcolor{blue}{trained} or \textcolor{red}{transferred} $BN_a$. }
        \label{fig:pen_fea_viz_BNa_FRP-full-OtoM_ex}
    \end{subfigure}
    \caption{Penultimate layer representations visualized by a Digits model and SVHN-domain users. From the top to the bottom, figures are plotted by domains: MNIST, SVHN, USPS, SynthDigits and MNIST-M in the Digits dataset.}
    \label{fig:pen_fea_viz_ex}
\end{figure*}

\begin{figure*}
    \centering
    \begin{subfigure}{0.49\textwidth}
        \centering
        \includegraphics[width=0.36\textwidth]{fig/FedRBNn/digits_PCA_bn1_bn_mean} \hfil
        \includegraphics[width=0.36\textwidth]{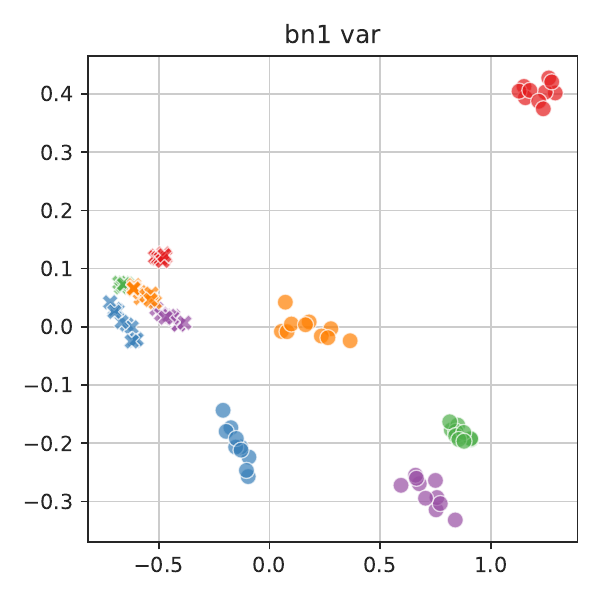} \hfil
    
        \includegraphics[width=0.36\textwidth]{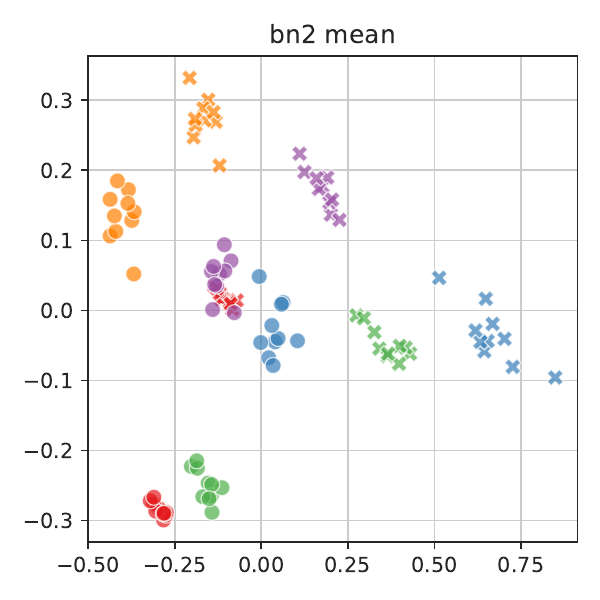} \hfil
        \includegraphics[width=0.36\textwidth]{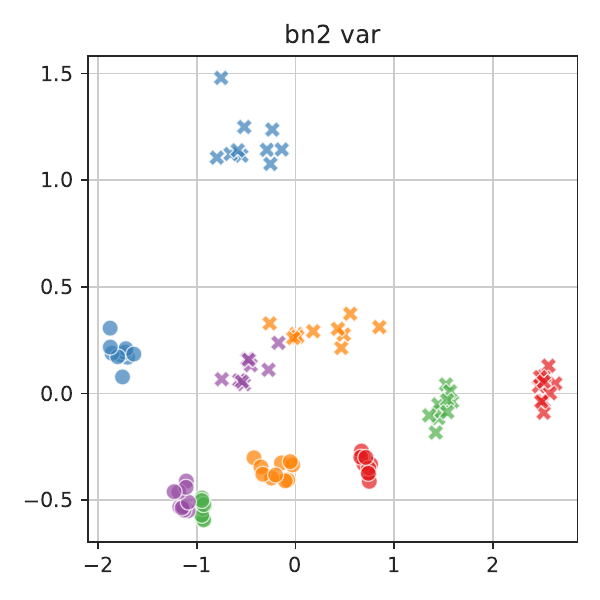} \hfil
    
        \includegraphics[width=0.36\textwidth]{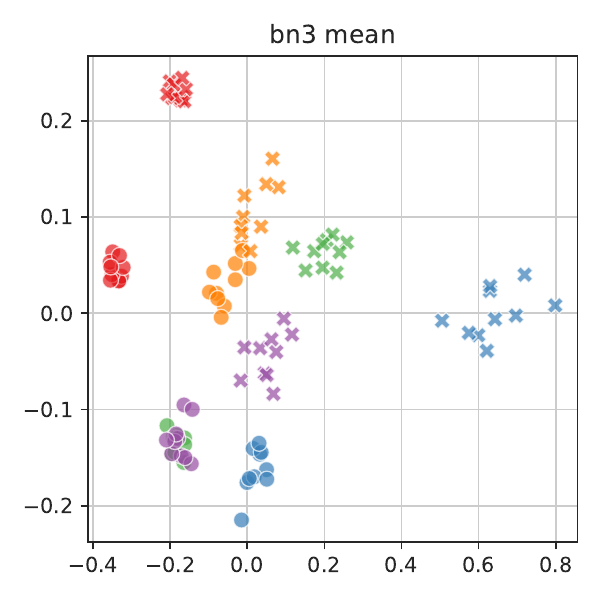} \hfil
        \includegraphics[width=0.36\textwidth]{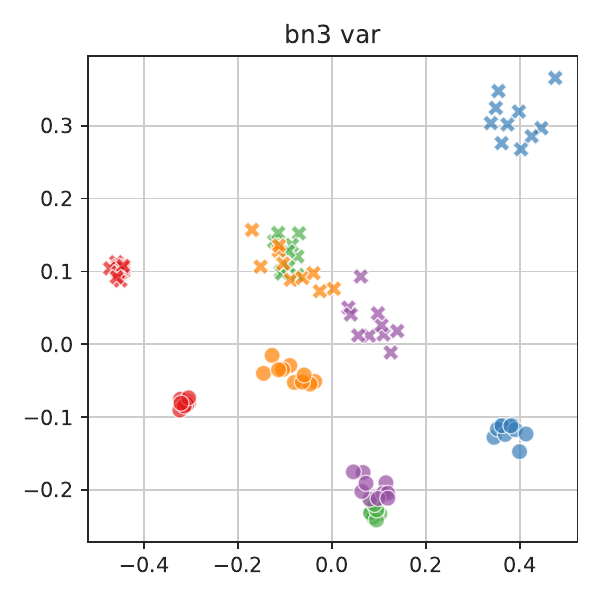} \hfil

        \includegraphics[width=0.36\textwidth]{fig/FedRBNn/digits_PCA_bn4_bn_mean} \hfil
        \includegraphics[width=0.36\textwidth]{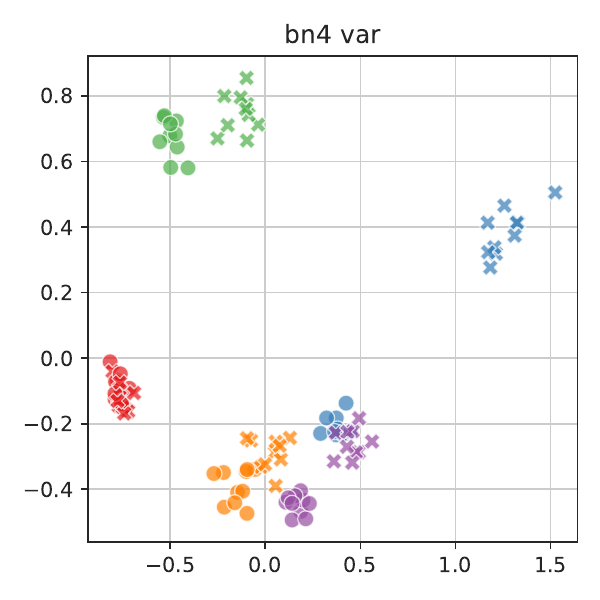} \hfil

        \includegraphics[width=0.36\textwidth]{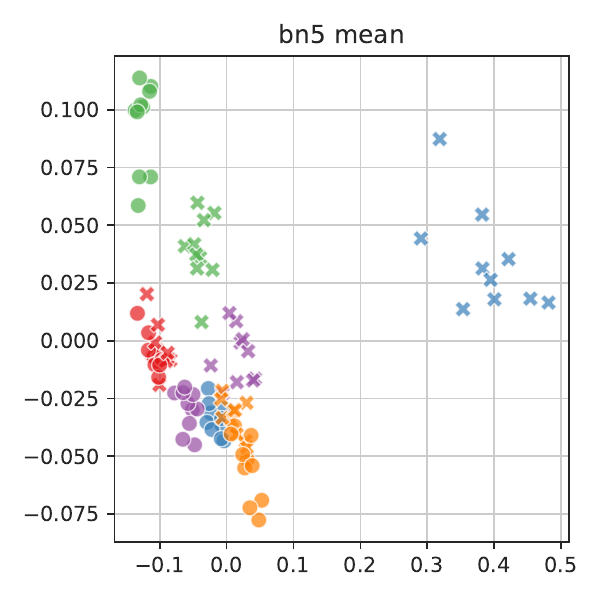} \hfil
        \includegraphics[width=0.36\textwidth]{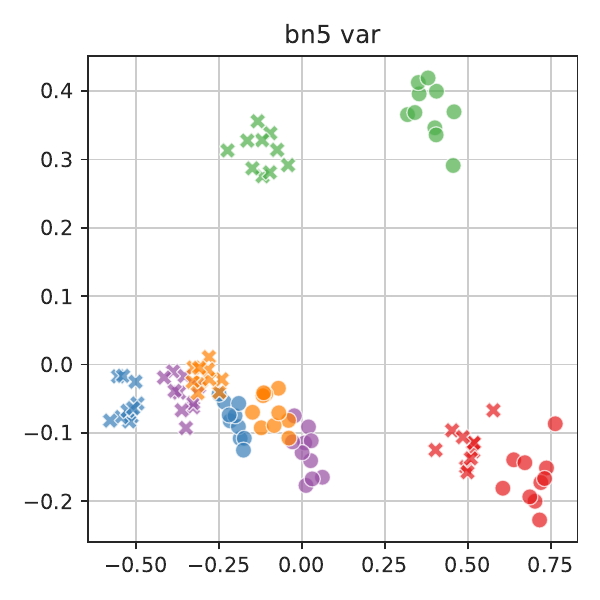} \hfil
        \caption{Digits dataset}
    \end{subfigure}
    \begin{subfigure}{0.49\textwidth}
        \centering
        \includegraphics[width=0.36\textwidth]{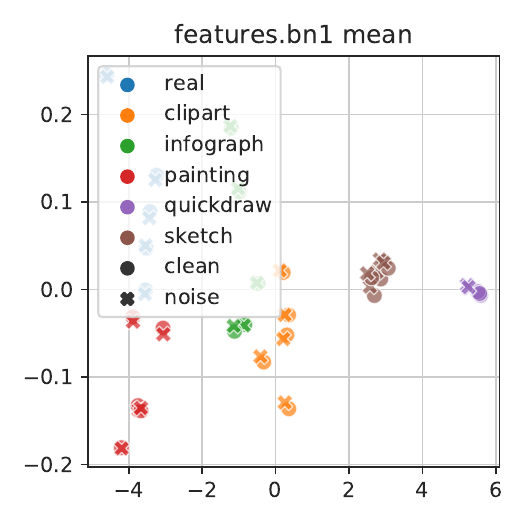} \hfil
        \includegraphics[width=0.36\textwidth]{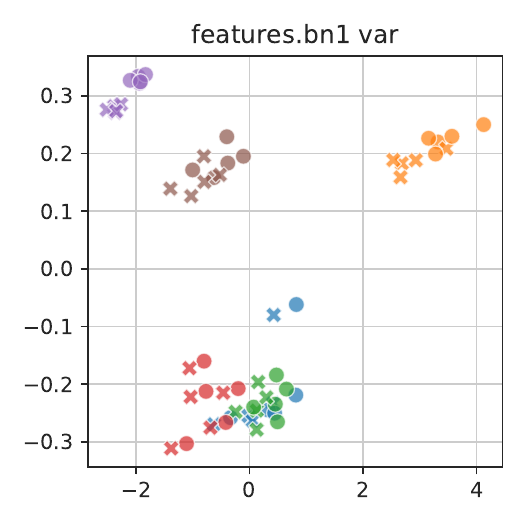}
    
        \includegraphics[width=0.36\textwidth]{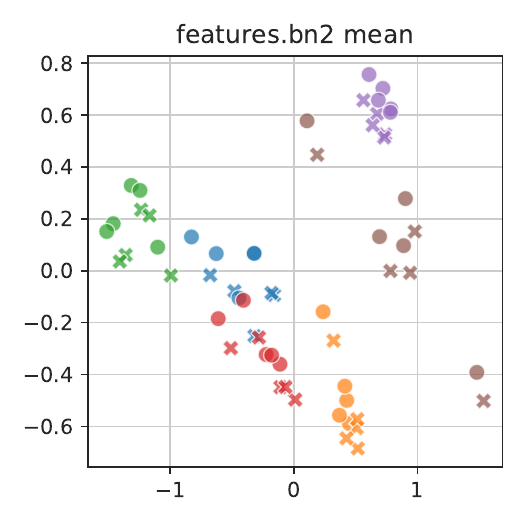} \hfil
        \includegraphics[width=0.36\textwidth]{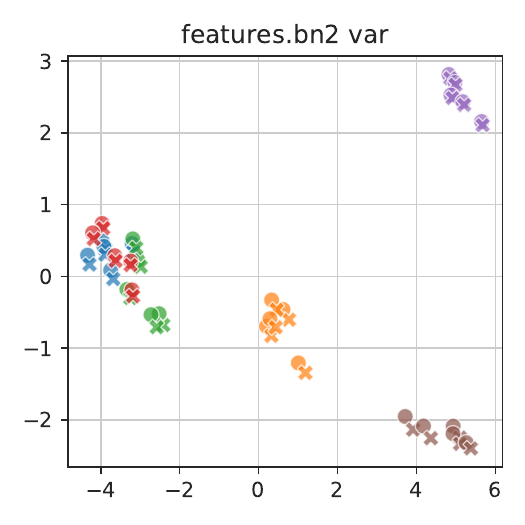}
    
        \includegraphics[width=0.36\textwidth]{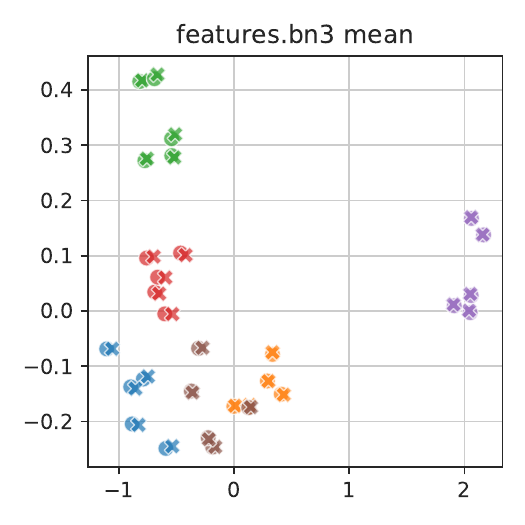} \hfil
        \includegraphics[width=0.36\textwidth]{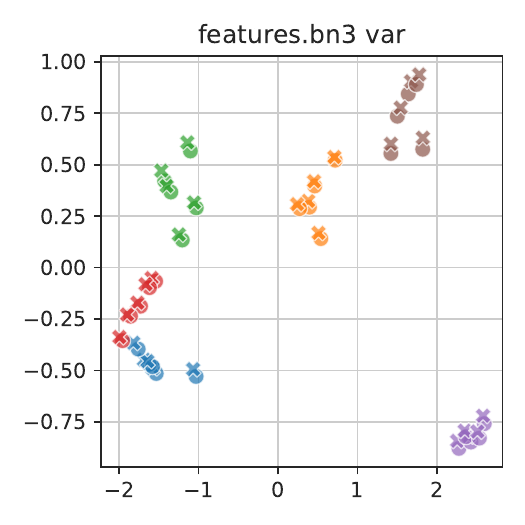}

        \includegraphics[width=0.36\textwidth]{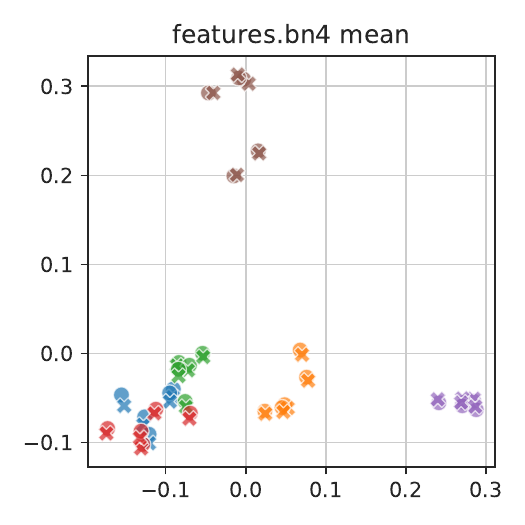} \hfil
        \includegraphics[width=0.36\textwidth]{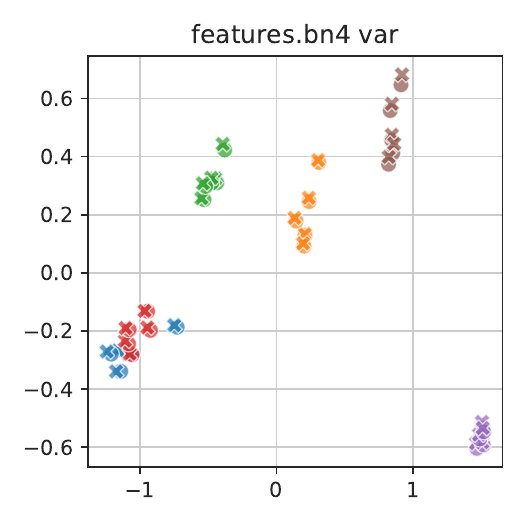}

        \includegraphics[width=0.36\textwidth]{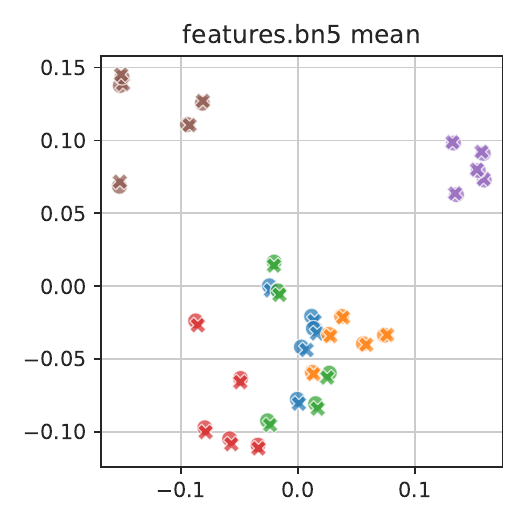} \hfil
        \includegraphics[width=0.36\textwidth]{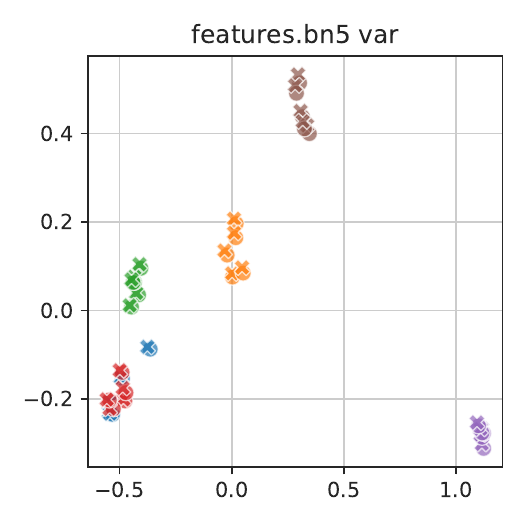}

        \includegraphics[width=0.36\textwidth]{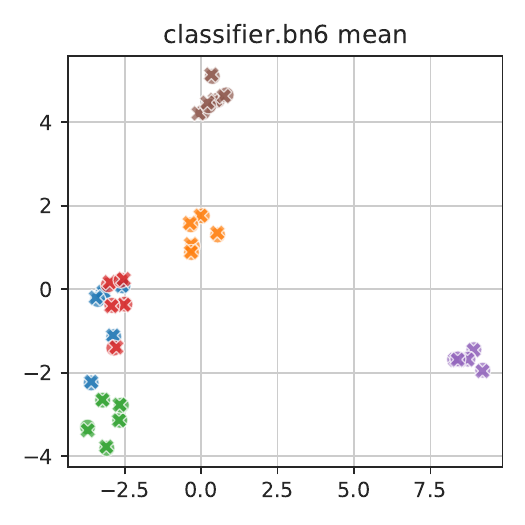} \hfil
        \includegraphics[width=0.36\textwidth]{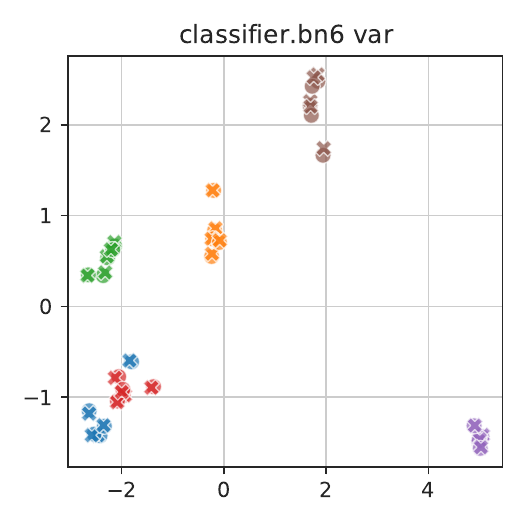}

        \includegraphics[width=0.36\textwidth]{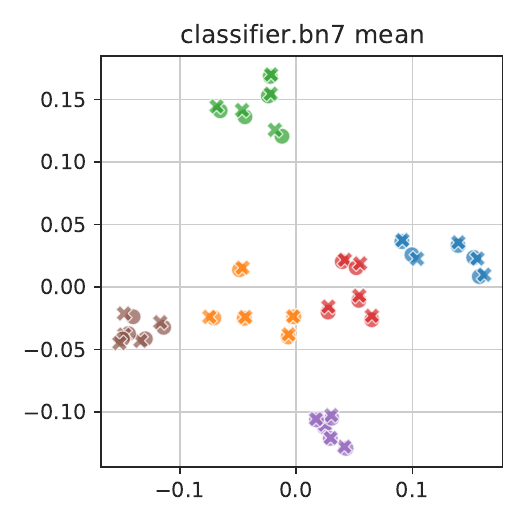} \hfil
        \includegraphics[width=0.36\textwidth]{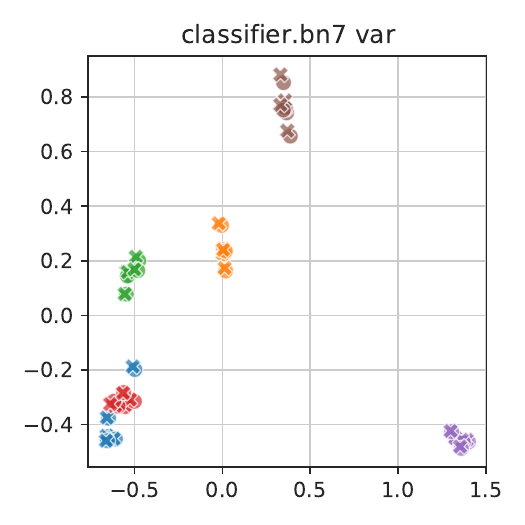}
        \caption{DomainNet dataset}
    \end{subfigure}
    \caption{Visualization of users' BN statistics by PCA. The BN after the first convolutional layer and the first linear layer is extracted. The model is trained by LBN + DBN on 100\% AT users on the Digits dataset.}
    \label{fig:bn_pca_ex}
\end{figure*}

\end{document}